\begin{document}

\title{Efficient and Effective Implicit Dynamic Graph Neural Network}

\author{Yongjian Zhong}

\affiliation{%
\department{Department of Computer Science}
  \institution{University of Iowa}
  % \streetaddress{P.O. Box 1212}
  \city{Iowa City}
  \state{IA}
  \country{USA}
  % \postcode{43017-6221}
}
\email{yongjian-zhong@uiowa.edu}

\author{Hieu Vu}

\affiliation{%
\department{Department of Computer Science}
  \institution{University of Iowa}
  % \streetaddress{P.O. Box 1212}
  \city{Iowa City}
  \state{IA}
  \country{USA}
  % \postcode{43017-6221}
}
\email{hieu-vu@uiowa.edu}

\author{Tianbao Yang}

\affiliation{%
\department{Department of Computer Science and Engineering}
  \institution{Texas A\&M University}
  % \streetaddress{P.O. Box 1212}
  \city{College Station}
  \state{TX}
  \country{USA}
  % \postcode{43017-6221}
}
\email{tianbao-yang@tamu.edu}

\author{Bijaya Adhikari}

\affiliation{%
\department{Department of Computer Science}
  \institution{University of Iowa}
  % \streetaddress{P.O. Box 1212}
  \city{Iowa City}
  \state{IA}
  \country{USA}
  % \postcode{43017-6221}
}
\email{bijaya-adhikari@uiowa.edu}

\newcommand{\fix}{\marginpar{FIX}}
\newcommand{\new}{\marginpar{NEW}}
\newcommand{\ourModel}{\textsc{IDGNN}\xspace}
\newcommand{\GT}{\mathcal{G}}
\newtheorem{prop}{Property}
\newtheorem{assumption}{Assumption}
\newtheorem{observ}{Observation}
\newtheorem{remark}{Remark}
\definecolor{Gray}{gray}{0.94}

\begin{abstract}

Implicit graph neural networks have gained popularity in recent years as they capture long-range dependencies while improving predictive performance in static graphs. Despite the tussle between performance degradation due to the oversmoothing of learned embeddings and long-range dependency being more pronounced in dynamic graphs, as features are aggregated both across neighborhood and time, no prior work has proposed an implicit graph neural model in a dynamic setting. %Key challenges 

In this paper, we present Implicit Dynamic Graph Neural Network (\ourModel) a novel implicit neural network for dynamic graphs which is the first of its kind. A key characteristic of \ourModel is that it demonstrably is well-posed, i.e., it is theoretically guaranteed to have a fixed-point representation. We then demonstrate that the standard iterative algorithm often used to train implicit models is computationally expensive in our dynamic setting as it involves computing gradients, which themselves have to be estimated in an iterative manner. %and cannot be used to train \ourModel efficiently. 
To overcome this, we pose an equivalent bilevel optimization problem and propose an efficient single-loop training algorithm that avoids iterative computation by maintaining moving averages of key components of the gradients. We conduct extensive experiments on real-world datasets on both classification and regression tasks to demonstrate the superiority of our approach over the state-of-the-art baselines. We also demonstrate that our bi-level optimization framework maintains the performance of the expensive iterative algorithm while obtaining up to \textbf{1600x} speed-up. 
\end{abstract}

%%
%% Keywords. The author(s) should pick words that accurately describe
%% the work being presented. Separate the keywords with commas.
\keywords{Dynamic Graphs, Implicit Graph Neural Networks, Graph Convolutional Networks}

%% A "teaser" image appears between the author and affiliation
%% information and the body of the document, and typically spans the
%% page.

\maketitle

\section{Introduction}

% In particular, we focus on node classification and regression tasks of dynamic graphs
% \begin{align}
%     \min_{f,g} \sum_i^N\ell \left(g\left(f(\mathcal{G}_i)\right),y_i\right)
% \end{align}
% where $f$ is a graph neural network (GNN) learning the embedding of the dynamic graph, $g$ is a  neural network predicting the target, $y_i\in\mathbb{R}^n$ is the target of $i$-th dynamic graphs, and $\ell(.)$ is the loss function. 

Graph Convolution Network (GCN)~\citep{kipf2016semi} and its subsequent variants~\citep{li2018adaptive}~\cite{velickovic2018graph} have achieved the state-of-the-art performance in predictive tasks in various applications including molecular prediction \citep{park2022acgcn}, recommendation \citep{liao2022sociallgn}, and hyperspectral image classification \citep{hong2020graph}. GCNs have also been extended to the dynamic setting, where the graph changes over time. Even in the dynamic setting, GCNs have achieved state-of-the-art results for numerous tasks including rumor detection \citep{sun2022ddgcn} and traffic prediction \citep{li2023dynamic}.

Despite numerous advantages, a major limitation of existing GCNs is that they can only aggregate information up to $k$-hops, where $k$ is the depth of the graph convolution operation. Hence, standard graph neural networks~\cite{kipf2016semi,velickovic2018graph} cannot capture long-range dependencies beyond a radius imposed by the number of convolution operations used. Trivial solutions like setting $k$ to a large number fail in overcoming this issue as empirical evidence~\cite{li2018deeper} suggests that deepening the layers of GCN, even beyond a few (2-4) layers, can lead to a notable decline in their performance. This is because the stacked GCN layers gradually smooth out the node-level features which eventually results in non-discriminative embeddings (aka oversmoothing). This creates a dilemma where, on the one hand, we would like to capture dependencies between nodes that are far away in the network by stacking multiple layers of GCN together. On the other hand, we also would like to maintain the predictive performance by only using a few layers. To tackle this dilemma in the static setting, Gu et al. \cite{gu2020implicit} proposed an implicit graph neural network (IGNN), which iterates the graph convolution operator until the learned node representations converge to a \emph{fixed-point} representation. Since there is no a priori limitation on the number of iterations, the fixed-point representation potentially contains information from all neighbors in the graph. Evidence shows that it is able to capture long-range dependency while maintaining predictive performance. Following this, other recent works~\cite{park2021convergent, liu2021eignn} also have addressed the problem in the static setting.

%Meanwhile, to capture long-range dependencies, multiple GCN layers need to be stacked since a single GCN layer can only aggregate information from neighboring nodes that are one hop away. This creates a conflict where, on the one hand, one would like to capture dependencies between nodes that are far away in the network by stacking multiple layers of GCN together. On the other hand, one would like to avoid the over-smoothing problem by only using a few layers. To tackle this dilemma in the static setting, Gu et al. \cite{gu2020implicit} proposed an implicit graph neural network (IGNN), which iterates the graph convolution operator until the learned node representations converge to a \emph{fixed-point} representation. Since there is no a priori limitation on the number of layers, it is able to alleviate the over-smoothing problem without sacrificing long-range dependency.

In the case of dynamic graphs, where graphs evolve over time, dynamic graph neural networks aggregate information over the current graph topology and historical graphs to learn meaningful representations~\cite{sun2022ddgcn,yang2020featurenorm}. Note the architecture of the graph neural networks within each time stamp in dynamic graph neural networks is similar to existing static GCNs. Hence, the information aggregated for each node in a given time stamp is still limited to a radius of $k-hops$ within the time stamp. Increasing the depth of the GCN operator in each time stamp to ensure long-range dependency exacerbates the performance degradation of dynamic graph neural networks as these models convolve both over the time stamps and within the time stamps, hence oversmoothing the features even faster. Therefore, capturing long-range dependency while improving (or even maintaining) the performance is a big challenge for dynamic graph neural networks. Despite its importance, very few prior works have studied this phenomenon in dynamic graphs: Yang et al. \cite{yang2020featurenorm} propose an L2 feature normalization process to alleviate the smoothing of features in dynamic graphs and Wang et al. \cite{wang2022tpgnn} mitigate the oversmoothing problem by emphasizing the importance of low-order neighbors via a node-wise encoder. However, these approaches either rescale features or forget neighborhood information, both of which are not ideal.

To address the challenges mentioned above, we propose \ourModel, an implicit neural network for dynamic graphs derived from the first principles. In designing \ourModel, we encountered multiple challenges including \emph{i)} uncertainty on whether fixed-point (converged) representations exist for implicit neural models defined over dynamic graphs; and \emph{ii)} efficiently training a model to find these fixed-point representations. In this paper, we overcome the first challenge by providing theoretical guarantees on the existence of the fixed-point representations on a single dynamic graph by leveraging a periodic model and generalizing this result to a set of dynamic graphs. For the second challenge, we notice that the stochastic gradient descent via implicit differentiation (often used by other implicit models~\cite{gu2020implicit,li2022unbiased}) is too inefficient in our problem setting. As such, we reformulate our problem as an equivalent bilevel optimization problem and design an efficient optimization strategy. The key contributions of the paper are as follows:

%There are some barriers in the way of devising an implicit GCN for dynamic graphs: 1) how to ensure the fixed-point representation exists in dynamic graphs; 2) Is there an efficient implicit model for dynamic graphs; 3) can the implicit model solve the over-smoothing problem in dynamic graphs? We propose the {\color{red} MODELNAME} and give positive answers to the aforementioned questions. Our contribution can be summarized as follows:
%we would like to further investigate the potential of the implicit model by incorporating it with dynamic graphs. 
\begin{itemize}
    \item We propose a novel dynamic graph neural network \ourModel, which ensures long-range dependency while providing theoretical guarantees on the existence of fixed-point representations. \ourModel
    is the first approach to leverage implicit graph neural network framework for dynamic graphs. 
    \item We present a bilevel optimization formulation of our problem and propose a novel stochastic optimization algorithm to efficiently train our model. Our experiments show that the proposed optimization algorithm is faster than the naive gradient descent by up to 1600 times.
    \item We conduct comprehensive comparisons with existing methods to demonstrate that our method captures the long-range dependency and outperforms the state-of-the-art dynamic graph neural models on both classification and regression tasks.
\end{itemize}

% Note: The proofs of all unverified results are provided in the Appendix.

\section{Related Work}
\textbf{Dynamic Graph Representation Learning:}
GNN has been successful for static graphs, leading to the development of GNN-based algorithms for dynamic graphs \citep{khoshraftar2022survey}. DyGNN \citep{ma2020streaming} comprises two components: propagation and update, which enable information aggregation and propagation for new interactions. EvolveGCN \citep{pareja2020evolvegcn} uses an RNN to update GCN parameters and capture dynamic graph properties.  Sankar et. al. \cite{sankar2020dysat} propose a Dynamic Self-Attention Network (DySAT) with structural and temporal blocks to capture graph information. TGN \citep{rossi2020temporal} models edge streaming to learn node embeddings using an LSTM for event memory. TGAT \citep{xu2020inductive} considers the time ordering of node neighbors. Gao et al. \cite{gao2022equivalence} explores the expressiveness of temporal GNN models and introduces a time-then-graph framework for dynamic graph learning, leveraging expressive sequence representations like RNN and transformers.

\textbf{Implicit Graph Models:}
The implicit models or deep equilibrium models define their output using fixed-point equations. \cite{bai2019deep}. propose an equilibrium model for sequence data based on the fixed-point solution of an equilibrium equation. El et al. \cite{el2021implicit} introduce a general implicit deep learning framework and discuss the well-posedness of implicit models. Gu et al. \cite{gu2020implicit} demonstrate the potential of implicit models in graph representation learning, specifically with their implicit model called IGNN, which leverages a few layers of graph convolution network (GCN) to discover long-range dependencies. Park et al. \cite{park2021convergent} introduce the equilibrium GNN-based model with a linear transition map, and they ensure the transition map is contracting such that the fixed point exists and is unique. Liu et al. \cite{liu2021eignn} propose an infinite-depth GNN that captures long-range dependencies in the graph while avoiding iterative solvers by deriving a closed-form solution. Chen et al. \cite{chen2022optimization} employ the diffusion equation as the equilibrium equation and solve a convex optimization problem to find the fixed point in their model.

\textbf{Implicit Models Training:}
Efficiently training implicit models has always been a key challenge.  Normally, the gradient of implicit models is obtained by solving an equilibrium equation using fixed-point iteration or reversing the Jacobian matrix \citep{gu2020implicit}. However, training these models via implicit deferential introduces additional computational overhead. Geng et al. \cite{geng2021training} propose a phantom gradient to accelerate the training of implicit models based on the damped unrolling
and Neumann series. Li et al.
\cite{li2022unbiased} leverage
 stochastic proximal gradient descent and its variance-reduced version to accelerate the training.

\section{Methodology}
%A dynamic or temporal graph is a graph data structure subject to changes or updates over time.
\begin{figure}[t]
    \centering
    \includegraphics[width=0.5\textwidth]{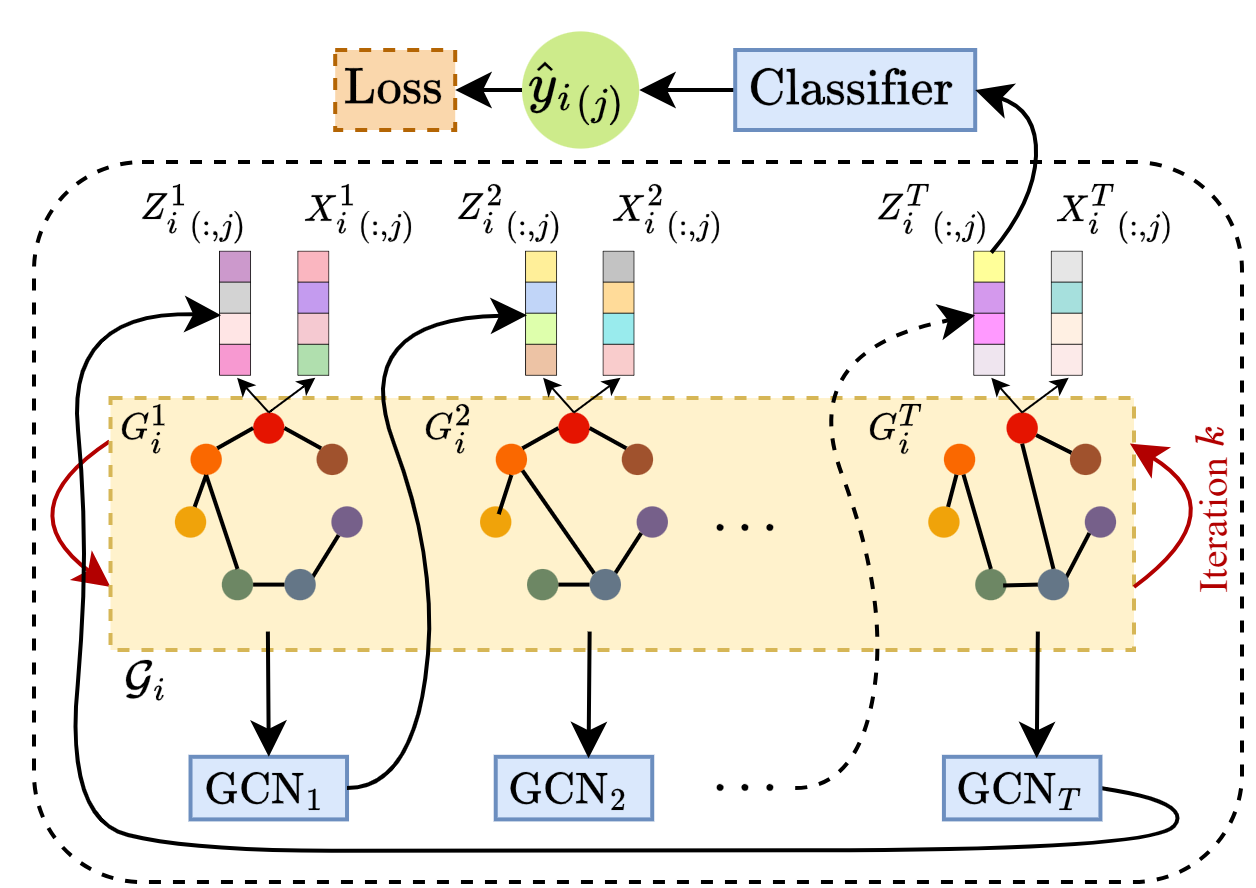}
    \caption{Model overview. This figure indicates the forward process of our model.}
\end{figure}
\subsection{Preliminaries}
% \noindent\textbf{Static Graphs:}  Each static graph $G$ is represented by a tuple $(X, V, E)$, where $X \in \mathbb{R}^{l\times n}$ represents the node attribute matrix ($l$ is the dimension of the node attributes). $V$ denotes the union set of nodes,and $n:=|V|$. $E$ is a set of edges between nodes in $V$. Let $A\in \mathbb{R}^{n\times n}$ denote the adjacency matrix of $G$, where $A_{ij}=1$ if $(i,j)\in E$, and $A_{ij}=0$ otherwise.

\noindent\textbf{Dynamic Graphs:}
We are given a set of $N$ dynamic graphs $\{\mathcal{G}_i\}_{i=1}^N$. Each dynamic graph $\mathcal{G}_i=\{G_i^1,...,G_i^t,...,G_i^T\}$ is a collection of $T$ snapshots. Let $\mathcal{V}$ denote the union set of nodes that appear in any dynamic graph and $n:=|\mathcal{V}|$ be the total number of nodes. Without loss of generality, each snapshot can be represented as $G_i^t=\{\mathcal{V},\mathcal{E}^t_i, X_i^t\}$ since we can assume each graph is built on the union set $\mathcal{V}$ and treat the absent nodes as isolated. $\mathcal{E}_i^t$ is the set of edges at time $t$ in $\mathcal{G}$. $X^t_i\in \mathbb{R}^{l\times n}$ represents the node attribute matrix, where $l$ is the dimension of the node attributes.
Let $A_i^t$ be the adjacency matrix of $G_i^t$.

In this paper, we focus on node-level tasks (e.g. classification and regression) for dynamic graphs.  
We consider the dataset as $\{(\mathcal{G}_i, {\bm y}_i)\}_{i=1}^N$, where $\mathcal{G}_i$ is the $i$-th dynamic graph, and ${\bm y}_i\in\mathbb{R}^{n}$ is the node-level labels assigned to the nodes in the last snapshot of dynamic graph $\mathcal{G}_i$.

\noindent\textbf{Permutation:} Here, we define a permutation function to simplify the notation in the rest of the paper. Let $\tau(t):=T- [(T-t+1) \mod T]$, where $T$ refers to the number of snapshots. This function maps $t\to t-1$ for $t\in [2,T]$ and maps $t\to T$ for $t=1$. 

% and $n$ denotes the number of nodes Each dynamic graph comprises $T$ time stamps, and the dynamic graph is presented as a set of graphs $\mathcal{G}=\{G^1, ...,G^T\}$ where each $G^t$ is a snapshot graph at time $t$ represented by the tuple $(X^t, A^t)$. 
% $X^t\in \mathbb{R}^{l\times n}$ represents the node attribute matrix, and $A^t\in \mathbb{R}^{n\times n}$ denotes the adjacency matrix of $G^t$, where $n$ is the number of distinct nodes in all snapshots, and $l$ is the dimension of the node attributes.

\noindent\textbf{Implicit Models:} In general, implicit models~\cite{geng2021training, el2021implicit, gu2020implicit} have the form $ Z = f(Z,X)$,
where $f$ is a function that can be parameterized by a neural network, $X$ is the data and $Z$ is the learned representation. We can obtain the fixed-point representation via iteration $Z^*=\lim_{k\to \infty} Z_{k+1}=\lim_{k\to \infty} f(Z_{k},X) = f(Z^*,X)$. 

Thus, the key to designing an implicit model for dynamic graphs is to provide a function $f$ that leads to converged representations.

% Now, imagine a graph convolution operator acting over $\GT$. The convolution operator takes each time-stamp $G_t$ as its input and combines the graph structure and node features at time-stamp $t$ with the previously learned node representations from time-stamp $t-1$. We notice that a well-designed graph convolution operator on cyclic dynamic graphs has the following two properties.

% \begin{prop}(Markov Property)
% The graph embeddings at timestamp $t$ only depend on the graph at the current timestamp $t$ (including both the adjacency and the feature matrices) and the previous time stamp $t-1$.
% \end{prop}
% \begin{prop}(Boundary Condition)
%     Given a dynamic graph $\{G_1,...,G_T\}$, the embeddings of the graph at timestamp $1$ only depend on the graph at the current timestamp $1$ and the last timestamp $T$.
% \end{prop}

% The first property is straightforward. The second property is also well-defined for periodic (and nearly periodic) graphs. For instance, if we consider weekly traffic graphs, we can observe that after a Sunday, there will always be a Monday. Due to this recurring pattern, the traffic on the following Monday is expected to be similar to the traffic on the present Monday, as people may have similar commute patterns. 
% Using these two properties, we first design an implicit model on cyclic dynamic graphs and then extend it to other dynamic graphs.
% that converges to a fixed point $\mathcal{Z}^*=\{Z_1^*,..., Z_T^*\}$

% \begin{align}
% \label{eq0}
%     \mathcal{Z}^* = f(\mathcal{Z}^*;\mathcal{G})
% \end{align}

\subsection{Implicit Model for Dynamic Graphs}
 We first consider a single dynamic graph $\mathcal{G}=\{G^1,...,G^T\}$ with $T$ snapshots and discover its well-posedness condition in this section. We then generalize the well-posedness conclusion for a set of dynamic graphs later. All the proofs are provided in the Appendix. Now,
let us consider the following stacked GCN model:
\vspace{-0.01in}
\begin{align}
\label{sys_matrix}
    Z^1_{k+1} &= \sigma(W^1Z^{\tau(1)}_kA^1+VX^1)\nonumber\\
    Z^2_{k+1} &= \sigma(W^2Z^{\tau(2)}_kA^2+VX^2)\nonumber\\
    &\cdots \nonumber \\
    Z^T_{k+1} &= \sigma(W^TZ^{\tau(T)}_kA^T+VX^T)
\end{align}

In the model presented above, the embeddings $Z^2_{k+1}$ of the nodes in the second time stamp in the $(k+1)$-th layer depend on the embeddings $Z^1_k$ of nodes in the first time stamp learned in the $k$-th layer and the feature of the nodes in the second time stamp $X^2$. This design enables us to propagate information between time stamps when stacking layers.
%The model can be described as follows:It comprises $T$ layers of extended Graph Convolutional Networks (GCN). At each time stamp, a GCN module is utilized to aggregate information from the previous time, while the current time information is collected through a linear module.
The parameters for the $t$-th layer of the model are denoted as $W^t\in \mathbb{R}^{d\times d}$ and $V\in \mathbb{R}^{d\times l}$ with $V$ being a shared weight across all layers. Note that the proposed model and the corresponding theory still hold when $V$ is not shared. We opt for a shared $V$ for simplicity (thorough empirical discussion on this choice is presented in the Experiment section). Following the principle of the implicit model \citep{el2021implicit, bai2019deep, gu2020implicit}, we apply our model iteratively infinite times. If the process converges, we consider the converged result $\{Z^1_\infty, \dots, Z^T_\infty \}$ as the final embeddings. Consequently, the final embeddings have to satisfy the system of equations in (\ref{sys_matrix}) and can be considered a fixed-point solution to (\ref{sys_matrix}). However, at this point, it is not clear whether the fixed-point solution always exists for arbitrary graph $\GT$.

\textit{Well-posedness} is a property that an implicit function, such as in (\ref{sys_matrix}), possesses a unique fixed point solution. While Gu et al. \cite{gu2020implicit} demonstrated the well-posedness of a single-layer implicit GCN on a single static graph, the question remains open for dynamic graphs.
\begin{comment}
    \begin{restatable}{lemma}{lemmaone}
\label{lemma_1} 
    The equilibrium equation $z=\sigma(Mz+b)$ has a unique fixed point solution if $\||M|\|_{op}< 1$, where $\|.\|_{op}$ is the operator norm, and $\sigma(\cdot)$ is an element-wise non-expansive function.
\end{restatable}
\end{comment}
% \begin{lemma}\label{lemma_1} \cite{gu2020implicit}
% The equilibrium equation $z=\sigma(Mz+\textbf{vec}(VX))$ has a unique fixed point solution if $\||M|\|_{op}< 1$, where $\|.\|_{op}$ is the operator norm, and $\sigma(\cdot)$ is an element-wise non-expansive function.
% \end{lemma}
To establish the well-posedness property for our model, we first introduce its vectorized version as follows. %This vectorization is achieved by utilizing the Kronecker product, enabling us to express Formula (\ref{sys_matrix}) in a linear form as depicted below:
\begin{align}
\label{sys}
    z^1_{k+1} &= \sigma(M^1z^{\tau(1)}_k+\textbf{vec}(VX^1))\nonumber\\
    z^2_{k+1} &= \sigma(M^2z^{\tau(2)}_k+\textbf{vec}(VX^2))\nonumber\\
    &\cdots \nonumber \\
    z^T_{k+1} &= \sigma(M^Tz^{\tau(T)}_k+\textbf{vec}(VX^T))
\end{align}
where $z=\textbf{vec}(Z)$ is column-wise vectorization of $Z$, and $M^i=(A^i)^\top\otimes W^i$ where $\otimes$ is the Kronecker product. %Our next objective is to demonstrate the well-posedness of the problem, specifically the existence and uniqueness of the fixed point of Formula (\ref{sys}). 
 %We next translate(\ref{sys}) , as stated below.
Note that Equations (\ref{sys}) can also be expressed in a single matrix form. This transformation involves sequentially connecting the shared nodes between the graphs. Thus, the formula (\ref{sys}) can be reformulated as follows:
%Moreover, we can offer an alternative perspective on Formula (\ref{sys}) by transforming a dynamic graph into a static graph. 

\begin{align}
\label{eq1}
    \begin{bmatrix}
    z^1\\
    z^2\\
    z^3\\
    \vdots\\
    z^T
    \end{bmatrix}=\sigma \left(\begin{bmatrix}
    0&0&\cdots &0&M^1\\
    M^2&0&\cdots &0&0\\
    0&M^3&\cdots &0&0\\
    \vdots &\vdots&\ddots &\vdots&\vdots\\
    0&0&\cdots &M^T &0
    \end{bmatrix}\begin{bmatrix}
    z^1\\
    z^2\\
    z^3\\
    \vdots\\
    z^T
    \end{bmatrix} + \begin{bmatrix}
       \textbf{vec}( VX^1) \\ \textbf{vec}( VX^2)\\ \textbf{vec}( VX^3) \\ \vdots \\ \textbf{vec}( VX^T)
    \end{bmatrix} \right)
\end{align}
Here, we omit the subscript for simplicity. Equation (\ref{eq1}) represents a single equilibrium form of Equation (\ref{sys}). It can also be viewed as the time-expanded static version~\cite{} of our original dynamic graph $\GT$. Based on the Banach fixed-point theorem \cite{royden1968real}, the Equation (\ref{eq1}) admits a unique fixed-point if the right-hand side is a contractive mapping w.r.t. $z$. Therefore, we express the well-posedness condition for our model as follows,

\begin{theorem}
For any element-wise non-expansive function $\sigma(\cdot)$, the coupled equilibrium equations in (\ref{sys}) have a unique fixed point solution if $\|\mathcal{M}\|_{op}< 1$, where $\mathcal{M}$ define as $$\begin{bmatrix}
    0&\cdots &0&M^1\\
    M^2&\cdots &0&0\\
    \vdots &\vdots&\ddots &\vdots\\
    0&\cdots &M^T &0
    \end{bmatrix}$$
and $\|\mathcal{M}\|_{op}$ is the operator norm of $\mathcal{M}$, which is the largest absolute eigenvalue. Furthermore, this is equivalent to $\|M^t\|_{op}< 1$ for any $t=1,...,T$.
\end{theorem}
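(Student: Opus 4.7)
The plan is to apply the Banach fixed-point theorem to the single-equation form (\ref{eq1}). Concatenating $z := [(z^1)^\top, \dots, (z^T)^\top]^\top$ and $b := [\textbf{vec}(VX^1)^\top, \dots, \textbf{vec}(VX^T)^\top]^\top$, the coupled system (\ref{sys}) becomes a single fixed-point equation $z = F(z) := \sigma(\mathcal{M} z + b)$ on a finite-dimensional Euclidean space, which is complete. It therefore suffices to show that $F$ is a strict $\ell_2$-contraction.

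Since $\sigma$ is element-wise non-expansive, it is $1$-Lipschitz in the Euclidean norm, so for any $u,v$,
\[
\|F(u) - F(v)\|_2 \le \|\mathcal{M}(u - v)\|_2 \le \|\mathcal{M}\|_{op}\,\|u - v\|_2.
\]
Under the hypothesis $\|\mathcal{M}\|_{op} < 1$, $F$ is a strict contraction, and Banach's theorem delivers a unique fixed point $z^{\ast}$; un-stacking $z^{\ast}$ into its $T$ blocks yields the unique solution $(z^{1,\ast},\dots,z^{T,\ast})$ of (\ref{sys}). This part of the argument is essentially a block-vectorized version of the single-graph well-posedness lemma of Gu et al.~\cite{gu2020implicit}, so nothing delicate happens here.

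The nontrivial part is the equivalence $\|\mathcal{M}\|_{op} < 1 \iff \|M^t\|_{op} < 1$ for every $t$, and my plan is to compute $\mathcal{M}^\top \mathcal{M}$ explicitly. Because $\mathcal{M}$ is a block cyclic shift, each block column of $\mathcal{M}$ contains exactly one nonzero block, and the nonzero blocks of distinct columns occupy distinct rows (the nonzero block in column $j$ sits in row $\tau^{-1}(j)$). Hence all cross terms vanish, and $\mathcal{M}^\top \mathcal{M}$ is block diagonal whose diagonal blocks are a permutation of $(M^1)^\top M^1, \dots, (M^T)^\top M^T$. Since the operator norm of a block-diagonal symmetric positive semidefinite matrix equals the maximum of the operator norms of its blocks,
\[
\|\mathcal{M}\|_{op} = \sqrt{\lambda_{\max}(\mathcal{M}^\top \mathcal{M})} = \max_{t}\sqrt{\lambda_{\max}((M^t)^\top M^t)} = \max_{t}\|M^t\|_{op},
\]
from which the equivalence is immediate.

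The main obstacle I expect is purely a bookkeeping one: getting the cyclic indexing driven by $\tau$ correct when verifying that $\mathcal{M}^\top \mathcal{M}$ is exactly block diagonal (as opposed to picking up spurious off-diagonal blocks from the wrap-around entry $M^1$ in the top-right corner). Once that structural computation is done cleanly, the rest is routine: $\sigma$-non-expansiveness plus Banach on the concatenated system.
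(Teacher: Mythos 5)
Your proof is correct, but it differs from the paper's in both halves, in ways worth noting. For existence and uniqueness, the paper does not argue contraction directly: it invokes its Lemma~2 (itself resting on Theorem~4.1 of Gu et al.), which is phrased in terms of the Perron--Frobenius eigenvalue of the entrywise absolute value $|\mathcal{M}|$ and hence carries the condition $\||\mathcal{M}|\|_{op}<1$. Your direct Banach argument in $\ell_2$ needs only $\|\mathcal{M}\|_{op}<1$ and therefore matches the theorem as stated more cleanly, at the price of committing to the interpretation of $\|\cdot\|_{op}$ as the induced $2$-norm (the paper's parenthetical ``largest absolute eigenvalue'' conflates spectral radius with operator norm; your reading is the standard and consistent one). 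For the reduction to the per-block condition, the paper factors $\mathcal{M}=\tilde{\mathcal{M}}P$ with $\tilde{\mathcal{M}}$ block diagonal and $P$ a block swap, then applies submultiplicativity to get $\|\mathcal{M}\|_{op}\le\max_t\|M^t\|_{op}$; as written it only establishes the inequality (equality follows because $P$ is orthogonal, but the paper does not say so), which proves the ``if'' direction but not the claimed equivalence. Your computation of $\mathcal{M}^\top\mathcal{M}$ as a block-diagonal Gram matrix yields the exact identity $\|\mathcal{M}\|_{op}=\max_t\|M^t\|_{op}$ in one step and hence delivers both directions of the equivalence; the cyclic bookkeeping you flag is indeed the only thing to check, and your observation that the row-to-column map of nonzero blocks is a bijection settles it. Net: your route is more self-contained and actually proves slightly more than the paper's own argument does.
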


In order to maintain $\|M^t\|_{op}< 1,\forall t=1,...,T$, it is necessary to ensure that the condition $\lambda_{\text{pr}}(|W^t|)\lambda_{\text{pr}}(A^t)< 1$ is satisfied, where $\lambda_{\text{pr}}(\cdot)$ represents the Perron-Frobenius eigenvalue. However, satisfying the condition is challenging in general as it is hard to track the eigenvalue as the underlying matrix changes.
To overcome this challenge, we impose a more stringent requirement on $W$ which is more easily enforceable by leveraging a convex projection. We formally state this in the following theorem.

\begin{restatable}{theorem}{thmtwo}
    Let $\sigma$ be an element-wise non-expansive function. If the coupled equilibrium equations satisfy the well-posedness condition, namely $\|\mathcal{M}^t\|_{op}\le\|W^t\|_{op}\|A^t\|_{op}<1, \forall t=1,..., T$, then there exists rescale coupled equilibrium equations, which satisfy the condition $\|W^t\|_{\infty}\|A^t\|_{op}<1, \forall t=1,...,T$, and the solutions of these two equations are equivalent.
\end{restatable}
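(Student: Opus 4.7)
The plan is to build a diagonal similarity transform from Perron--Frobenius eigenvectors so that the operator-norm well-posedness condition on each $W^t$ is translated into the desired row-sum (infinity-norm) condition, while keeping the two coupled systems' fixed points in explicit bijection.

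First, for each $t$ I would apply the Perron--Frobenius theorem to the nonnegative matrix $|W^t|$ to obtain a strictly positive eigenvector $v^t > 0$ satisfying $|W^t|\,v^t = \lambda_{\text{pr}}(|W^t|)\,v^t$. Setting $D^t := \mathrm{diag}(v^t)$ and $\tilde W^t := (D^t)^{-1} W^t D^t$, a direct computation yields $|\tilde W^t|\,\mathbf{1} = (D^t)^{-1}|W^t|\,v^t = \lambda_{\text{pr}}(|W^t|)\,\mathbf{1}$, so each row of $|\tilde W^t|$ sums to the Perron eigenvalue and hence $\|\tilde W^t\|_\infty = \lambda_{\text{pr}}(|W^t|)$. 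Because the well-posedness hypothesis forces $\lambda_{\text{pr}}(|W^t|)\,\|A^t\|_{op}<1$ (as argued in the paragraph preceding the theorem), the rescaled weights automatically satisfy $\|\tilde W^t\|_\infty\,\|A^t\|_{op}<1$, which can then be enforced during training by a simple row-wise convex projection.

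Next, I would establish equivalence by the change of variables $\tilde Z^t := (D^t)^{-1} Z^t$, $\tilde V^t := (D^t)^{-1} V$, together with the rescaled activation $\tilde\sigma^t(y) := (D^t)^{-1}\sigma(D^t y)$; this activation stays element-wise and, because each diagonal entry of $D^t$ is strictly positive, remains non-expansive via $|\tilde\sigma^t(a)_i-\tilde\sigma^t(b)_i| = (v^t_i)^{-1}|\sigma(v^t_i a_i)-\sigma(v^t_i b_i)| \le |a_i - b_i|$. Substituting into (\ref{sys_matrix}) shows that $\{\tilde Z^t\}$ obeys the same coupled equilibrium form with $W^t,V,\sigma$ replaced by $\tilde W^t,\tilde V^t,\tilde\sigma^t$, and the map $\tilde Z^t \mapsto D^t\tilde Z^t$ puts the fixed points of the original and rescaled systems into bijection, delivering the equivalence claim.

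The main obstacle I expect is the interaction between the diagonal similarity transform and a general element-wise non-expansive $\sigma$: a naive rescaling $\tilde Z^t=(D^t)^{-1} Z^t$ does not commute with $\sigma$, so the transform must be absorbed into the activation via $\tilde\sigma^t$ as above, and one must check that $\tilde\sigma^t$ preserves non-expansiveness so that Theorem 1 still applies to the rescaled system. A secondary subtlety is that the shared input weight $V$ becomes time-indexed $\tilde V^t$ under rescaling; this is harmless because the paper explicitly states that the theory still holds without sharing $V$.
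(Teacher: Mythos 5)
Your route is essentially the paper's: a positive diagonal similarity transform, justified by Perron--Frobenius theory, converts the operator-norm well-posedness condition into an infinity-norm condition, and the fixed points of the original and rescaled systems are matched by the corresponding diagonal change of variables. The paper merely cites Theorem~4.3 of Gu et al.\ for the existence of the diagonal matrices, whereas you construct them explicitly from the Perron eigenvectors of $|W^t|$; you are also \emph{more} careful than the paper in absorbing the scaling into the activation via $\tilde\sigma^t(y)=(D^t)^{-1}\sigma(D^t y)$ and checking that this stays element-wise non-expansive, a point the paper's proof skips entirely.

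There is, however, one concrete gap in the step that links your two halves. In the coupled system $Z^t=\sigma(W^tZ^{\tau(t)}A^t+VX^t)$, snapshot $t$ reads its input from snapshot $\tau(t)$, so the substitution $Z^t=D^t\tilde Z^t$ produces the effective weight $(D^t)^{-1}W^tD^{\tau(t)}$, \emph{not} the similarity transform $(D^t)^{-1}W^tD^t$ whose row sums you computed. Since each $D^t$ is built from the Perron eigenvector of a \emph{different} matrix $|W^t|$, there is no reason that $D^{\tau(t)}=D^t$, so the identity $|\tilde W^t|\mathbf{1}=\lambda_{\mathrm{pr}}(|W^t|)\mathbf{1}$ is established for a matrix that never appears in the rescaled equations: the infinity-norm bound and the fixed-point bijection refer to two different systems. (The paper's own proof writes $\hat W^t=S^tW^t(S^t)^{-1}$ and has the same blemish, so you have faithfully reproduced its weakness rather than introduced a new one.) A clean repair is to choose the scalings consistently around the cycle --- e.g., solve $|W^t|v^{\tau(t)}\le c_t v^t$ with $c_t\|A^t\|_{op}<1$ for positive vectors $v^1,\dots,v^T$, which is possible because the Perron root of the product of the $|W^t|$ around the cycle is at most $\prod_t\||W^t|\|_{op}<\prod_t\|A^t\|_{op}^{-1}$; equivalently, apply Perron--Frobenius once to the stacked block-cyclic matrix $|\mathcal{M}|$ of (\ref{eq1}) so that the block indices match by construction. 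Two smaller caveats: a strictly positive Perron eigenvector requires irreducibility (otherwise perturb and use the strictness of the hypothesis), and passing from $\|W^t\|_{op}\|A^t\|_{op}<1$ to $\lambda_{\mathrm{pr}}(|W^t|)\|A^t\|_{op}<1$ silently replaces $W^t$ by $|W^t|$ --- a slip you inherit from the paragraph preceding the theorem rather than one you introduce yourself.
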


As previously stated, the theorem is formulated for a single dynamic graph. In the following Remark, we present a broader and more general conclusion for a set of dynamic graphs.
\begin{remark}
    Considering a set of dynamic graphs denoted as $\{\mathcal{G}_i\}_{i=1}^N$, achieving fixed-point representations for all dynamic graphs using our model is guaranteed if the condition $\|W^t\|_{\infty}\|A^t_i\|_{op}<1$ holds for all time steps $t=1,...,T$ and all graph indices $i=1,...,N$.
\end{remark}

In practice, we can track the matrices with the largest operator norms ($\max_{i}\|A^t_i\|_{op}$) at each time step. Focusing on these "critical" matrices and their corresponding constraints ensures our model meets the required conditions.
%without unnecessary calculations.

\noindent\textbf{Incorporating Downstream Tasks: }
Based on the established conditions, we can obtain the fixed-point representation by iteratively applying our model. Now, we want the fixed-point representations suited for specific downstream tasks with their own optimization objective. Let us now introduce the comprehensive objective which incorporates both the application loss and the convergence requirements mentioned above. To this end, we utilize a neural network $f_\theta(.)$, parameterized by $\theta$, to map graph embeddings to their respective targets. Let $\mathcal{W}:= \{W^1,...,W^T\}$. The comprehensive objective can now be summarized as follows:

\medskip

\noindent\fbox{%
\parbox{0.9\columnwidth}{%
\begin{align}
\label{eq_z}
    \min_{\theta, \mathcal{W}, V}& \mathcal{L}(\theta, \mathcal{W}, V)=\sum_{i=1}^N\ell(f_\theta(z^T_{i}), \bm{y}_i)\\\nonumber
    \text{s.t.}~& z_{i}^1=\sigma \left(\left((A^1_{i})^\top\otimes W^1\right)z_{i}^{\tau(1)} +  \textbf{vec}(VX^1_{i}) \right)\\\nonumber
    & \cdots\\\nonumber
    & z_{i}^T=\sigma\left(\left((A^T_{i})^\top\otimes W^T\right)z_{i}^{\tau(T)} + \textbf{vec}(VX^T_{i})\right), \\\nonumber
    & \|W^t\|_\infty\le \frac{\kappa}{\|A^t_{i}\|_{op}}, \forall i=1,...,N,t=1,...,T
\end{align}

}
}
\medskip

Where $\ell$ is a loss function ( e.g. cross entropy loss, mean square error), and $\kappa$ is a positive number that is close to 1, which is added for numerical stability.

To find the fixed-point representation (i.e. forward pass), we can use fixed-point iteration or other root-finding algorithms. Backpropagation requires storing intermediate results, which is infeasible since there might be hundreds of iterations in discovering the representation. Thus, the key challenge in solving Equation (\ref{eq_z}) lies in determining how to perform backpropagation effectively, especially in the context of dynamic graphs.  

% \begin{theorem}
% Let $\sigma$ be an element-wise non-expansive non-linear function. The coupled equilibrium equations satisfy the well-posedness condition, namely $\||W|\|_{op}\|A\|_{op}<1$. There exists rescale coupled equilibrium equations, which satisfy $\|W\|_{\infty}\|A\|_{op}<1$, and the solutions of these two equations are equivalent.
% \end{theorem}

% \textbf{Discussion on generalization to non-periodic and semi-periodic graphs: }
% The boundary condition property implies the periodic nature of the dynamic graph, where the graph repeats its pattern after reaching the time stamp $T$ and returns to the initial time stamp. However, it is unlikely for the periodic assumption to hold over a long time sequence or not hold at all in some cases.  For instance, considering the example of traffic graphs, Mondays in summer and winter exhibit distinct patterns. Nevertheless, we can overcome this challenge by dividing the long time sequence into multiple shorter periods. Within each period, we can assume the periodicity holds, enabling the application of our model. Furthermore, while our model was originally designed as a temporal model based on periodic observations, it can also be applied to general dynamic graphs, as demonstrated in the experimental section. This versatility highlights the broader applicability of our model beyond periodic scenarios.

\section{Training}

In this section, we provide two algorithms to solve Equation (\ref{eq_z}). We first explore the stochastic gradient descent (SGD) method where we estimate the gradients leveraging the Implicit Function Theorem. This approach offers several advantages, such as eliminating the need to store intermediate results during the forward pass and enabling direct backpropagation through the equilibrium point. While widely used in various techniques \citep{bai2019deep, gu2020implicit}, this approach presents certain drawbacks when applied to our specific model, particularly regarding computational overhead. Subsequently, we introduce an efficient training algorithm for our model, which adopts a bilevel viewpoint of our problem. This novel approach allows us to overcome the limitations of the vanilla SGD method, resulting in improved computational efficiency during training.

\subsection{SGD with Implicit Differentiation} The algorithm operates as follows: it first finds the fixed-point embedding through iteration and then computes the gradient based on this embedding. It then updates the weights using SGD. The main obstacle in our way is estimating the gradient.

The parameters that need to be updated are $W,V$ (from GCN layers), and $\theta$ (from the classifier). The gradient with respect to parameter $\theta$ can be obtained as $\frac{\partial \mathcal{L}}{\partial \theta}$, which can be computed using autograd functions given the fixed point. However, computing the gradient for other parameters presents a greater challenge. Let $\frac{\partial \mathcal{L}}{\partial P_i}$ represent the gradient with respect to $W_i$ or $V_i$. Then, the gradient is computed as $
\frac{\partial \mathcal{L}}{\partial P_i}=\sum_{j=1}^T\frac{\partial\mathcal{L}}{\partial z_j}\frac{\partial z_j}{\partial P_i}$. The computation of $\frac{\partial\mathcal{L}}{\partial z_j}$ can be achieved through the autograd mechanism. However, determining $\frac{\partial z_j}{\partial P_i}$ is non-trivial due to the cyclic definition of $z_j$.

By the definition of our model, the learned embeddings must satisfy the following equations:
\begin{align}
    \label{eq:grad}
    \nonumber
    z^1-\sigma(M^1z^{\tau(1)}&+\textbf{vec}(V X^1))=0\\\nonumber
    z^2-\sigma(M^2z^{\tau(2)}&+\textbf{vec}(V X^{2}))=0\\\nonumber
    &\vdots\\
    z^T-\sigma(M^Tz^{\tau(T)}&+\textbf{vec}(V X^{T}))=0
\end{align}
We apply column-wise vectorization on matrices $W$ and $V$ respectively to obtain $w$ and $v$. We can then calculate the gradients $\frac{\partial z}{\partial w}$ and $\frac{\partial z}{\partial v}$ using the implicit function theorem. Here, we will show the details of the derivation. 

Let $Z^t_{ij}, A^t_{ij}, X^t_{ij}$ and $W^t_{ij}$ denote the element at the $i$-th row and $j$-th column of $Z^t, A^t, X^t$ and $W^t$, respectively. Taking the derivative of element-wise form of Equation (\ref{eq:grad}), $Z_{ij}^t-\sigma(\sum_l\sum_nW^t_{il}Z^{\tau(t)}_{ln}A^t_{nj}+\sum_lV_{il}X_{lj}^t)=0$, with respect to $W_{bc}^a$ we obtain
\begin{align*}
    \frac{\partial Z_{ij}^t}{\partial W_{bc}^a}-\Sigma^{t}_{ij}\left(  \sum_n\delta_{at}\delta_{bi}Z^{\tau(t)}_{cn}A_{nj}^t + \sum_l \sum_n W_{il}^t\frac{\partial Z_{ln}^{\tau(t)}}{\partial W_{bc}^a}A_{nj}^t \right)=0
\end{align*}
where $\delta_{at}$ is the indicator function which equals 1 only when $a=t$, and $\odot$ denotes element-wise multiplication. Let $\sigma^{\prime}(.)$ represent the derivative $\sigma(.)$, and we define $$ \Sigma^t_{ij}:= \sigma^{\prime}(\sum_l\sum_nW^t_{il}Z^{\tau(t)}_{ln}A^t_{nj}+\sum_lV_{il}X_{lj}^t)$$
which means $\Sigma^t$ is a matrix and has a shape like $Z^t$.
Let $H^t:=(Z^{\tau(t)}A^t)^T$, and $H^t_{\cdot c}$ denotes the $c$-th column of $H^t$. Let $e_b$ be a column vector with value 1 at $b$-th entry and 0 elsewhere. Enumerating $Z^t_{ij}$ for all $i,j$, we have 
\begin{align*}
    \frac{\partial z^t}{\partial W_{bc}^a}-\textbf{vec}(\Sigma^t)\odot\left(  \delta_{at}e_b\otimes H^t_{\cdot c} + M^t\frac{\partial z^{\tau(t)}}{\partial W_{bc}^a} \right)=0
\end{align*}
Therefore, for any time stamp $t$, the gradient of $z^t$ with respect to the $a$-th layer of GCN, $w^a$, can be expressed as:
\begin{align}
    \label{grad:w}
    \frac{\partial z^t}{\partial w^a}-\Xi^t\odot\left(  \delta_{at}H^t\otimes I + M^t\frac{\partial z^{\tau(t)}}{\partial w^a} \right)=0
\end{align}
Where $\Xi^t$ is a matrix that has identical column vectors, and each column vector is the $\textbf{vec}(\Sigma^t)$. Similarly, we can compute the gradient of $Z^t_{ij}$ w.r.t. $V_{bc}$.
\begin{align*}
    \frac{\partial Z_{ij}^t}{\partial V_{bc}}-\Sigma^t_{ij}\left(  \delta_{bi}X_{cj}^t + W_{il}^t\frac{\partial Z_{ln}^{\tau(t)}}{\partial W_{bc}^a}A_{nj}^t \right)=0
\end{align*}

Therefore,
\begin{align}
\small
    \label{grad:v}
    \frac{\partial z^t}{\partial v}- \Xi^t\odot\left( (X^t)^\top\otimes I+M^t \frac{\partial z^{\tau(t)}}{\partial v}\right)=0
\end{align}

While these equations provide a path for gradient computation, it is essential to note that all gradients are interconnected within a system of equations. The resolution of such a system entails a substantial computational overhead.

\textbf{Per-iteration Complexity of naive gradient descent: } Equations (\ref{grad:w}) and (\ref{grad:v}) reveal that a set of equilibrium equations determines the gradients. Consequently, to compute the gradients, we need to solve these equations using fixed-point iteration. Each layer necessitates one round of fixed-point iteration, and in total, including $V$, we need to perform fixed-point iteration $T+1$ times. The major computational overhead arises from the multiplication of $M$ with the derivatives, resulting in a complexity of $O((nd)^2d^2)$. Each fixed-point iteration involves $T$ instances of such computations. Consequently, the overall runtime for each update is $O(T^2n^2d^4)$. Although the adjacency matrix is sparse, it only reduces the complexity to $O(T^2nd^4)$. This limits the application of our model in large-scale dynamic graphs and hampers our ability to utilize large embeddings. %in complexity poses constraints on applying our model to deeper dynamic graphs and hampers our ability to utilize large embeddings.
 
\subsection{Efficient Update via Bilevel Optimization}
To address the previously mentioned challenges, we turn to Bilevel Optimization \cite{colson2007overview} as a potential solution.
% To facilitate this approach, we will rephrase Formula (\ref{eq_z}) using the subsequent Lemma.
\begin{comment}
    
\begin{lemma}
\label{l2}
If Formula (\ref{sys}) has an unique embedding $\{z_1^*,\cdots, z_T^*\}$, let $\overline{j}:=j\mod T$, then the equation 
\begin{align}
z_j &= \sigma(M_{\overline{j+T}}\sigma(M_{\overline{j+T-1}}\cdots\sigma(M_{\overline{j+1}}z_j \nonumber \\
&\quad + \textbf{vec}(VX_{\overline{j+1}}))\cdots + \textbf{vec}(VX_{\overline{j+T-1}}))+ \textbf{vec}(VX_{\overline{j+T}})) \nonumber 
\end{align}

has the same fixed point as $z_j^*$ for any $j\in \{1,\cdots, T\}$, and vice versa.
\end{lemma}

\end{comment}
\begin{comment}
\begin{proof}
$\Leftarrow$: Suppose formula (\ref{eq_one}) converges to ${z_1,...,z_T}$, and for any arbitrary $j$, formula (\ref{eq_z}) converges to $\hat{z}_j$, where $z_j\neq \hat{z}_j$. If we substitute $\hat{z}_j$ into formula (\ref{eq_one}), we obtain a sequence $\{\hat{z}_j, ..., \hat{z}_T, \hat{z}_1, ..., \hat{z}{j-1}\}$. This sequence also serves as a fixed-point of formula (\ref{eq_one}), given that $\hat{z}_j$ is the fixed-point of formula (\ref{eq_z}) and formula (\ref{eq_z}) is a composition of different levels of GCN. According to our assumption, formula (\ref{eq_one}) converges to a unique solution, implying that these two expressions share the same fixed-point.  $\Rightarrow$ part follows a similar line of reasoning.
\end{proof}
\end{comment}
We reformulate the problem presented in Equation (\ref{eq_z}) as the following standard bilevel optimization problem. 

\medskip

\noindent\fbox{%
\parbox{0.9\columnwidth}{%

\begin{align}
\label{bimodel}
\small
\min_{\theta, \mathcal{W}, V}& \mathcal{L}(\theta, \mathcal{W}, V)=\sum_{i=1}^N\ell(f_\theta(z^T_i, \bm{y}_i)\\\nonumber
    \text{s.t.}~ &  z^T_{i} = \arg\min_{z}\|z-\phi(z,\mathcal{W},V;\mathcal{G}_i)\|^2_2\\ \nonumber
    & \|W^t\|_\infty\le \frac{\kappa}{\|A^t_{i}\|_{op}}, \forall i=1,...,N,t=1,...,T
\end{align}
}}

\medskip

\noindent Where $\phi(z,\mathcal{W}, V;\mathcal{G}_i)=\sigma(M^T_{i}...\sigma(M^1_{i}z+\textbf{vec}(VX^1_{i}))... +\textbf{vec}(VX^T_{i}))$. The main differences between these problems lie in the constraints. Equation (\ref{bimodel}) introduces explicit constraints solely on the last snapshot, leading to a multi-block bilevel optimization problem. This type of problem has been investigated recently by \cite{qiu2022large} and \cite{ hu2022multi}. \cite{qiu2022large} focus on top-K NDCG optimization, formulating it as a compositional bilevel optimization with a multi-block structure. Their approach simplifies updates by sampling a single block batch in each iteration and only updating the sampled blocks. \cite{hu2022multi} employs a similar technique but addresses a broader range of multi-block min-max bilevel problems.

However, these state-of-the-art bilevel optimization algorithms are designed to address problems with strongly convex lower-level problems, which does not hold true for our problem. It is evident that our lower-level problems in \ref{bimodel} are generally nonconvex with respect to $z$ since they involve highly nonlinear neural networks. Additionally, these methods utilize stochastic gradient descent on the lower level in each iteration, which may lead to potential extra computation in estimating the gradient. Nevertheless, it is crucial to note that the optimal solution to our lower-level problem corresponds to the fixed point of Eq (\ref{sys}). Leveraging this insight, we employ a fixed-point iteration to update the lower-level solution. We propose a single loop algorithm (see Algorithm \ref{algo}) with fixed-point updates.

\begin{algorithm}[t]
\caption{Bilevel Optimization for IGDNN}
\label{algo}
\begin{algorithmic}[1]
\REQUIRE {$\mathcal{D}=\{(\GT_i,\bm{y}_i)\}_{i=1}^N, \eta_1, \eta_2, \gamma$}       
\ENSURE{$\omega, \theta$}
\STATE Randomly initialize $z_j^0$ and $v_j^0$ for $j=1,...,N$
\FOR{$k=0,1,...,K$}
\STATE Sample a batch data $\mathcal{B}$
\STATE $\hat{z}_j^{k+1}=\begin{cases} (I-\eta_1)\hat{z}_j^{k}+\eta_1 \phi(\hat{z}_j^{k},\omega^t;\GT_i) & j\in\mathcal{B}\\ \hat{z}_j^{t} & \text{o.w.} \end{cases}$
\STATE $\hat{v}_j^{k+1}=\begin{cases} (I-\eta_2\nabla_{zz}^2 g(\hat{z}_j^k,\omega^k))\hat{v}_j^k+\eta_2 \nabla_{z} \ell_j(\hat{z}_j^k,\omega^k) & j\in\mathcal{B}\\ \hat{v}_j^{t} & \text{o.w.} \end{cases}$
\STATE Update gradient estimator $$\Delta^{k+1}=\frac{1}{|\mathcal{B}|}\sum_{j\in\mathcal{B}}\left[ \nabla_\omega \ell_j(\hat{z}_j^k,\omega^k)-\nabla^2_{\omega z}g_j(\hat{z}_j^k,\omega^k)\hat{v}_j^k \right]$$
\STATE $m^{k+1} = (1-\gamma)m^k + \gamma \Delta^{k+1} $
\STATE $\omega^{k+1} = \Pi_\Omega \left( \omega^k - \eta_0 m^{k+1}\right)$
\ENDFOR
\end{algorithmic}
\end{algorithm}

To better illustrate our algorithm, let $\omega=\{\mathcal{W},V\}$, and $g_i(z,\omega)$ represents the $i$-th-block lower-level problem, defined as $\|z-\phi(z,\omega;\mathcal{G}_i)\|^2_2$, and let $\ell_i(z,\omega):=\ell(f_\theta(z), y_i)$.
The key to solving the bilevel optimization problem in (\ref{bimodel}) is to estimate the hypergradient $\nabla \mathcal{L}(\theta, \omega)$ and backpropagate.
The hypergradient of our objective in (\ref{bimodel}) with respect to $\omega$ as follows:
\begin{align*}
\footnotesize
    \nabla \mathcal{L}(\theta, \omega) = &\frac{1}{N}\sum_{i=1}^N\nabla \ell_i(z_{i}^T,\omega)\\
    &-\nabla_{\omega z}^2 g_i(z_{i}^T, \omega)\left[ \nabla_{zz}^2 g_i( z_{i}^T, \omega) \right]^{-1}\nabla_z \ell_i(z_{i}^T, \omega)
    % \vspace{-0.1in}
\end{align*}

\begin{table*}[!t]
    \centering
 
    \caption{Performance for classification task (ROCAUC) and regression task (MAPE (\%)). Performances on Brain10, England-COVID, PeMS04, and PeMS08 for baseline methods are taken from \cite{gao2022equivalence}. The best performance for each dataset is highlighted in bold, while the second-best performance is underlined.}
    \resizebox{\textwidth}{!}{
    \begin{tabular}{c|ccc|cccccc}
    \toprule
     & \multicolumn{3}{c|}{Classification} & \multicolumn{6}{c}{Regression} \\ 
     Model & Brain10&DBLP5&Reddit4 & \multicolumn{2}{c}{England-COVID} & \multicolumn{2}{c}{PeMS04} & \multicolumn{2}{c}{PeMS08} \\
     &  &  &  & Trans. & Induc. & Trans. & Induc. & Trans. & Induc. \\ \midrule\rowcolor{Gray}
    EvolveGCN-O & 0.58$\pm$0.10 & 0.639$\pm$0.207 & 0.513$\pm$0.008 & 4.07$\pm$0.73 & 3.88$\pm$0.47 & 3.20$\pm$0.25 & 2.61$\pm$0.42 & 2.65$\pm$0.12 & 2.40$\pm$0.27 \\
    EvolveGCN-H & 0.60$\pm$0.11 & 0.510$\pm$0.013 & 0.508$\pm$0.008 & 4.14$\pm$1.14 & 3.50$\pm$0.42 & 3.34$\pm$0.14 & 2.84$\pm$0.31 & 2.81$\pm$0.28 & 2.81$\pm$0.23 \\\rowcolor{Gray}
    GCN-GRU & 0.87$\pm$0.07 & 0.878$\pm$0.017 & 0.513$\pm$0.010 & 3.56$\pm$0.26 & \underline{2.97$\pm$0.34} & 1.60$\pm$0.14 & 1.28$\pm$0.04 & 1.40$\pm$0.26 & 1.07$\pm$0.03 \\
    DySAT-H & 0.77$\pm$0.07 & \textbf{0.917$\pm$0.007} & 0.508$\pm$0.003 & 3.67$\pm$0.15 & 3.32$\pm$0.76 & 1.86$\pm$0.08 & 1.58$\pm$0.08 & 1.49$\pm$0.08 & 1.34$\pm$0.03 \\\rowcolor{Gray}
    GCRN-M2 & 0.77$\pm$0.04 & 0.894$\pm$0.009 & \underline{0.546$\pm$0.020} & 3.85$\pm$0.39 & 3.37$\pm$0.27 & 1.70$\pm$0.20 & 1.20$\pm$0.06 & 1.30$\pm$0.17 & 1.07$\pm$0.03 \\
    DCRNN & 0.84$\pm$0.02 & 0.904$\pm$0.013 & 0.535$\pm$0.007 & 3.58$\pm$0.53 & 3.09$\pm$0.24 & 1.67$\pm$0.19 & 1.27$\pm$0.06 & 1.32$\pm$0.19 & 1.07$\pm$0.03 \\\rowcolor{Gray}
    TGAT & 0.80$\pm$0.03 & 0.895$\pm$0.003 & 0.510$\pm$0.011 & 5.44$\pm$0.46 & 5.13$\pm$0.26 & 3.11$\pm$0.50 & 2.25$\pm$0.27 & 2.66$\pm$0.27 & 2.34$\pm$0.19 \\
    TGN & \underline{0.91$\pm$0.03} & 0.887$\pm$0.004 & 0.521$\pm$0.010 & 4.15$\pm$0.81 & 3.17$\pm$0.23 & 1.79$\pm$0.21 & 1.19$\pm$0.07 & 1.49$\pm$0.26 & 0.99$\pm$0.06 \\\rowcolor{Gray}
    GRU-GCN & \underline{0.91$\pm$0.03} & 0.906$\pm$0.008 & 0.525$\pm$0.006 & \underline{3.41$\pm$0.28} & \textbf{2.87$\pm$0.19} & \underline{1.61$\pm$0.35} & \underline{1.13$\pm$0.05} & \underline{1.27$\pm$0.21} & \underline{0.89$\pm$0.07} \\
    IDGNN & \textbf{0.94$\pm$0.01}& \underline{0.907$\pm$0.005} & \textbf{0.556$\pm$0.017} & \textbf{2.65$\pm$0.25} & 3.05$\pm$0.25 & \textbf{0.53$\pm$0.05} & \textbf{0.63$\pm$0.04} & \textbf{0.45$\pm$0.11} & \textbf{0.50$\pm$0.05} \\ \bottomrule
    \end{tabular}}
    \label{tab:exp_combined}
\end{table*}

%Computing the hypergradient directly involves computing and inverting large Hessian matrix $\left[ \nabla_{zz}^2 g_i( z_{i}^T, \omega) \right]^{-1}$ which is computationally expensive. 
%directly we compute this directly, we encounter problems with the Hessian $\left[ \nabla_{zz}^2 g_i( z_{i}^T, \omega) \right]^{-1}$ for each block.

 Explicitly computing the inverted  Hessian matrix $\left[ \nabla_{zz}^2 g_i( z_{i}^T, \omega) \right]^{-1}$ in computation of our hypergradient is computationally expensive. 
% explicitly 
%However, our gradient does not require explicitly compute the Hessian, as long as we can compute the term $\left[ \nabla_{zz}^2 g_i( z_{i}^T, \omega) \right]^{-1}\nabla_z \ell_i(z_{i}^T, \omega)$.
Inspired by \cite{hu2022multi} and \cite{qiu2022large}, we instead directly approximate $\left[ \nabla_{zz}^2 g_i( z_{i}^T, \omega) \right]^{-1}\nabla_z \ell_i(z_{i}^T, \omega)$ using $v_i$ for each block by moving average estimation (line 5).  More specifically, we track the optimal point of $\min_v \frac{1}{2}v^\top\nabla_{zz}^2 g_i( z_{i}^T, \omega) v - v^\top\nabla_z \ell_i(z_{i}^T, \omega)$ for each block by maintaining $v_i$. Let $\hat{z}_i$ be a moving average approximation to the optimal lower-level solution $z_{i}^T$. We use a single fixed-point update for $\hat{z}_i$ (line 4). We do not want to update all blocks in every iteration since this is impractical when the number of blocks is large. To address this issue, we use stochastic training. For sampled blocks, we update their $\hat{z}$ and $\hat{v}$, and we compute the hypergradient (line 6).  Note that the multiplication $\nabla^2_{\omega z}g_j(\hat{z}_j^k,\omega^k)\hat{v}_j^k$ (in line 6) can also be efficiently computed using Hessian vector product~\cite{pearlmutter1994fast}. .As a result, the training time for our algorithm is proportional to normal backpropagation, eliminating the need for fixed-point iterations.

Note that in cases where the lower-level problem is strongly convex, the errors introduced by these approximations are well-contained \citep{hu2022multi}. Our lower-level problem admits a unique fixed point, hence, employing fixed-point iteration becomes an efficient means of attaining the optimal lower-level solution, akin to the effectiveness of gradient descent under strong convexity.
Therefore, it is justifiable to assert that our approximations are effective in this scenario, with empirical evidence robustly endorsing their practical efficacy.

\textbf{Per-iteration Complexity of bilevel optimization: } The main computational overheads are updating $v$ and estimating gradient. Both steps are involved with estimating a huge Hassian matrix, but, in practice, we use Hessian vector product to avoid explicitly computing the Hessian matrix. Therefore, the dominant runtime of bilevel optimization is three times backpropagation. Each backpropagation takes $O(Tnd^2+Tn^2d)$.
% \begin{theorem}
% \label{converge}
% Under proper condition, the algorithm (\ref{algo}) will converge with a proper setting of parameters $\eta_1 = xxx, \eta_2 = xxx, \gamma_3 = xxx $, algorithm \ref{algo} ensures after $T = xxx$ iterations we can find an $\epsilon$-stationary solution of $F(x)$, i.e., $\mathbb{E}\|\nabla F(\omega_\tau)\|^2\le \epsilon^2$ for some randomly selected $\tau$ from $\{1,...,T\}$.
% \end{theorem}

\section{Experiments}

In this section, we present the performance of IDGNN in various tasks, including effectiveness in capturing long-range dependencies and avoiding oversmoothing on a synthetic dataset. We benchmark IDGNN against nine state-of-the-art baselines on multiple real-world datasets, comprising three node classification and four node regression datasets. Key dataset statistics are summarized in Table \ref{data}, with further details available in section \ref{subsection:datasets}. Due to the space constraints, specifics on experimental setup and hyperparameters are provided in Appendices \ref{appendix:setup} and \ref{appendix:hyper}, respectively. 
Our code is available for reproducibility. \footnote{\url{https://github.com/yongjian16/IDGNN}}
.
 
% We conduct the experiments on {\color{red} seven} datasets: \textit{Brain10}, \textit{PeMS04}, \textit{PeMS08}, \textit{Spain-COVID}, \textit{England-COVID}, \textit{IMDB} and \textit{UIHC}. The statistics are summarised in Table (\ref{data}). We compare our method with state-of-the-art approaches: EvolveGCN-O, EvolveGCN-H, GCN-GRU, DySAT, GCRN-M2, DCRNN, TGAT, TGN, GRU-GCN, and GRU-GCN.

\begin{table}[h]
\small
    \centering
    \rowcolors{2}{white}{Gray} 
    \caption{Statistics of datasets. $N$: number of dynamic graphs, $|V|$: number of nodes, $\min |E|$: minimum number of edges, $\max |E|$: maximum number of edges, $T$: window size, $d$: feature dimension, $y$ label dimension}
    \begin{tabular}{cccccccc}
    \toprule
        & $N$& $|V|$& $\min|E|$& $\max|E|$ & $T$ &$d$ & $y$\\ \midrule
        Brain10 & 1 & 5000 & 154094 & 167944 & 12 & 20 & 10  \\ 
        DBLP5   & 1 & 6606 & 2912 & 5002 & 10 & 100 & 5 \\
        Reddit4 & 1 & 8291 & 12886 & 56098 & 10 & 20 & 4 \\
        PeMS04& 16980&307&680&680&12&5&3 \\
        PeMS08 & 17844& 170&548&548&12&5&3 \\ 
        % Spain-COVID& 443 & 52 & 7030 & 7030 & 7 &1 & 1 \\
        English-COVID & 54 & 129 & 836&2158&7&1&1 \\ \bottomrule
    \end{tabular}
    
    \label{data}
\end{table}

\begin{figure*}[h]
     \centering
     \includegraphics[width=\textwidth]{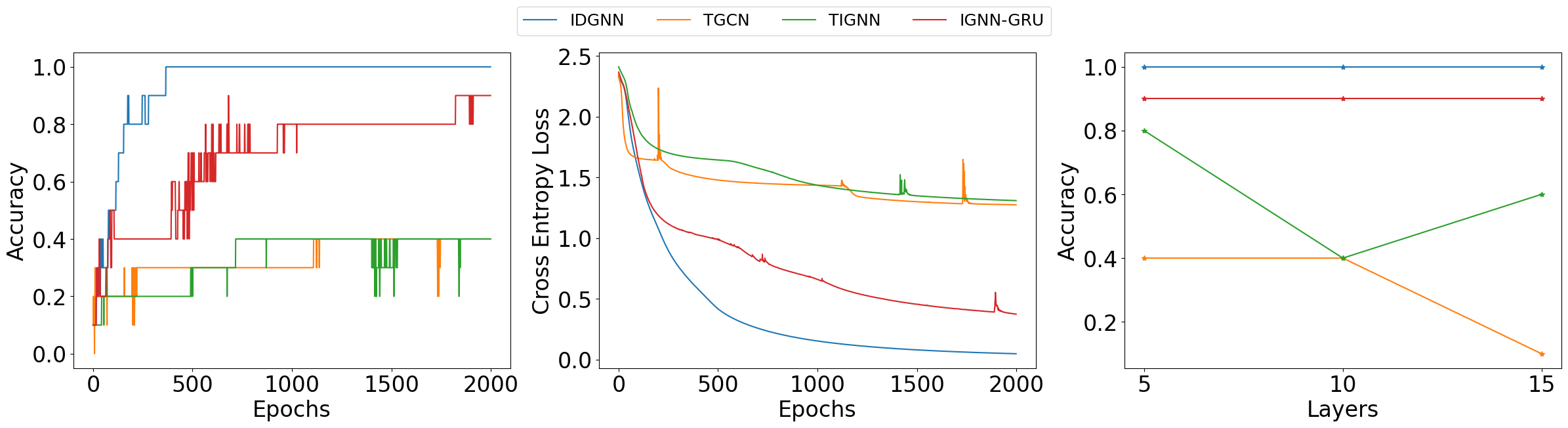}
     \caption{The left and middle are accuracy and loss curves when using 10 layers. The x-axis is epochs, and the y-axis is accuracy and cross entropy loss, respectively. The right plot represents the accuracy results of all baselines under different layer settings.}
        \label{2graphs}
\end{figure*}

\subsection{Datasets}\label{subsection:datasets}

\textbf{Brain} dataset is derived from a real-world fMRI brain scans dataset \footnote{https://tinyurl.com/y4hhw8ro}. In this dataset, nodes represent brain tissues, and edges capture nodes' activation time similarity in the examined period. The node attributes are generated from the fMRI signals \cite{Xu2019AdaptiveNN}. Given the temporal graph, our goal is to predict the functionality of each node, which is one of the $\textbf{10}$ categories.

\noindent\textbf{DBLP} is an extracted co-author network from DBLP website \footnote{https://dblp.org/}, where nodes are authors and edges represent co-authorship relationships. The extracted authors are from $\textbf{5}$ research areas \cite{Xu2019AdaptiveNN}, serving as our class labels. Note attributes are word2vec representations of titles and abstracts from related papers.

\noindent\textbf{Reddit} dataset is extracted from a popular online platform Reddit \footnote{https://www.reddit.com/} \cite{hamilton2018inductive}, where nodes represent posts, and edges are constructed between nodes having similar keywords. The node attributes are word2vec representations of the comments of a post \cite{Xu2019AdaptiveNN}, and node labels correspond to one of the $\textbf{4}$ communities or ``subreddits" to which the post belongs.

\noindent\textbf{PeMS04 \& PeMS08}
These two datasets represent traffic sensor networks in two distinct districts of California during various months. In this context, each node symbolizes a road sensor, while the edge weights denote the geographical distance between two sensors (with uniform weights across all snapshots). Every sensor captures average traffic statistics such as flow, occupancy, and speed over a 5-minute interval. The datasets utilized in this study are identical to those in \cite{gao2022equivalence}, where we exclusively forecast attribute values for a single future snapshot.

\noindent\textbf{England-COVID} This temporal graph dataset is taken from \cite{gao2022equivalence}, which is created by aggregating information from the mobility log released by Facebook under Data For Good program \footnote{https://dataforgood.fb.com/tools/disease-prevention-maps/} for England \cite{panagopoulos2020transfer}. The nodes are cities, and the edges are transportation statistics between them. Given a temporal graph of length $7$, which represents the past week, we need to predict the infection rate of the next day.

\subsection{Regression}

For node-level tasks, there are two evaluation paradigms: transductive and inductive. Transductive evaluation allows the model to access the nodes and edges in testing data during training (i.e., only the labels are hidden), while inductive evaluation involves testing the model on new nodes and edges. In simpler terms, transductive evaluation separates training and testing sets based on nodes, while inductive evaluation separates them based on time stamps (i.e., models are trained on past time stamps and tested on the future).
Here, we conduct experiments on both settings.

The datasets we used for the regression task are England-COVID, PeMS04, and PeMS08. We use the mean average percentage error (MAPE) as our evaluation metric. The results are reported in Table \ref{tab:exp_combined} with mean MAPE and standard deviation. 
Our proposed method outperforms other methods in both transductive and inductive settings, with the exception of the inductive case in England-COVID. Our method demonstrates a significant improvement for PeMS04 and PeMS08, particularly in the transductive learning scenario. In comparison to the second-best method, our proposed model reduces the error by over 1\% of MAPE, but our model on the inductive learning scenario does not enjoy such improvement.  We attribute this difference to our model's tendency to separate nodes, even when they have the same labels and topology. We delve into this phenomenon in the Appendix \ref{appendix:emb}. %Consequently, in the inductive setting, our model may separate unseen nodes even further, negatively impacting performance.

\subsection{Classification}

We conducted classification experiments on Brain10, DBLP5, and Reddit4 datasets. Since these datasets consist of only one dynamic graph, we focused on testing the transductive case. The evaluation was done using the Area under the ROC Curve (AUC) metric. The average prediction AUC values and their corresponding standard deviations are presented in Table \ref{tab:exp_combined}. Our proposed model achieved the top rank in 2 out of 3 datasets and was the second best in the remaining dataset. These results demonstrate that our model successfully captures the long-range dependencies within the dynamic graphs, as reflected in the learned embeddings.
% \begin{table}[]
%     \centering
%     \caption{ Results for classification tasks. Performances on Brain10 for baseline methods are taken from \cite{gao2022equivalence}. The highest performance for each dataset is highlighted in bold, while the second-best performance is underlined.}
%     \resizebox{\textwidth}{!}{
%     \begin{tabular}{c| c c c}
%     \hline

%     Model& Brain10   & DBLP5 & Reddit4\\ \hline
%     EvolveGCN-O&0.58$\pm$0.10  &                0.639	$\pm$ 0.207 &             0.513	$\pm$ 0.008 \\
%     EvolveGCN-H& 0.60$\pm$ 0.11  &              0.51	$\pm$ 0.013&              0.508	$\pm$ 0.008 \\
%     GCN-GRU& 0.87$\pm$ 0.07  &                  0.878	$\pm$ 0.017&              0.513	$\pm$ 0.01 \\
%     DySAT-H& 0.77$\pm$ 0.07  &                  \textbf{0.917	$\pm$ 0.007} &              0.508	$\pm$ 0.003  \\
%     GCRN-M2& 0.77$\pm$ 0.04  &                  0.894	$\pm$ 0.009&              \underline{0.546	$\pm$ 0.02}  \\
%     DCRNN& 0.84$\pm$ 0.02  &                    0.904	$\pm$ 0.013&              0.535	$\pm$ 0.007  \\
%     TGAT& 0.80$\pm$ 0.03 &                      0.895	$\pm$ 0.003&              0.51	$\pm$ 0.011  \\
%     TGN& \underline{0.91$\pm$ 0.03} &           0.887	$\pm$ 0.004&              0.521	$\pm$ 0.01  \\
%     GRU-GCN& \underline{0.91 $\pm$ 0.03} &       0.906	$\pm$ 0.008&              0.525	$\pm$ 0.006 \\
%     IDGNN& \textbf{0.94 $\pm$ 0.005} &      \underline{0.907	$\pm$ 0.005} &              \textbf{0.556	$\pm$ 0.017}  \\\hline
%     \end{tabular}} 

%     \label{exp1}
% \end{table}

\begin{table}[t]
    \centering
    % \rowcolors{2}{white}{lightgray} 
    \caption{Memory and runtime comparison results for all methods on reddit4 and DBLP5 datasets. We report the memory usage using MB and runtime using seconds per window}.
    \begin{tabular}{ccccc}
    \toprule 
    & \multicolumn{2}{c}{Reddit4}& \multicolumn{2}{c}{DBLP5}\\
    & Mem. & Time & Mem. & Time \\
    \midrule\rowcolor{Gray}
        EvolveGCN-O	&\textbf{42}	&0.0649±0.017 &\textbf{52}&	0.0672±0.014\\ 
        EvolveGCN-H	&\underline{52}&	0.0904±0.020 &	82&	0.0997±0.037\\\rowcolor{Gray}
        GCN-GRU	&221	&0.0733±0.012&	200&	0.1142±0.045\\
        DySAT-H	&181	&0.1613±0.056&	165&	0.1343±0.012\\\rowcolor{Gray}
        GCRN-M2	&322	&0.4345±0.080&	319&	0.4934±0.076\\
        DCRNN	&223	&0.1697±0.019&	278&	0.2121±0.039\\\rowcolor{Gray}
        TGAT	&793	&0.0750±0.014&	338&	0.0770±0.015\\
        TGN	&450	&0.0417±0.004&	233&	0.0454±0.012\\\rowcolor{Gray}
        GRU-GCN	&4116	&\textbf{0.0199±0.008}&	580&	\textbf{0.0161±0.007}\\
        IDGNN	&89	&\underline{0.0291±0.007}&	\underline{75}&	\underline{0.0302±0.002}\\\bottomrule
    \end{tabular}
    \label{mandt2}
\end{table}

\subsection{Long-range Dependency and Oversmoothing}
\label{longrange}
\noindent \textbf{Long-range Dependency:}
The toy example aims to test the ability of all approaches to capture long-range dependencies. The toy data we constructed consists of $\{5,10,15\}$ snapshots, with each snapshot being a clique of $10$ nodes. Each node has $10$ associated attributes. The task is to classify nodes at the last snapshot, where each node represents its own class (i.e., there are a total of 10 classes). The node attributes consist of randomly generated numbers, except for the first snapshot, which uses the one-hot representation of the class. Successful classification of this dataset requires effective information aggregation starting in the initial time stamp, propagating the class label information over time, and avoiding oversmoothing as the information is propagated. In this dataset, there are no testing nodes; all nodes are used for training. 

The training results are presented on Figure \ref{2graphs}. Our model is compared with TGCN~\citep{zhao2019t}. We also propose two more modified baselines: IGNN-GRU and TIGNN, which are obtained by replacing the GCNs within GCN-GRU~\citep{seo2018structured} and TGCN by IGNN.  
We ensure the comparison is fair by ensuring a similar number of parameters are used, and we test all models on $\{5,10,15\}$ layers. All methods are trained for a maximum of 2000 epochs, followed by the hyper-parameter selection approach described in the Appendix. As shown in the figure, we explored the impact of varying the number of layers on both baseline models and our proposed model. We documented the resulting accuracies accordingly. Notably, TGCN exhibits a discernible pattern of over-smoothing, evidenced by a performance decline with increasing layers.
Conversely, IGNN-GRU and TIGNN do not demonstrate such susceptibility. Additionally, we observed challenges in data fitting for both methods, whereas our model consistently achieved 100\% accuracy across different layer configurations. This experiment underscores the robustness of our architecture in fully unleashing the potential of implicit models in dynamic graph learning.

Moreover, we perform an alternative experiment, wherein we shift label information from snapshot $1$ to snapshot $5$. This adjustment ensures consistent difficulty in leveraging label information across all models. Due to space constraints, we provide the detailed results in the Appendix; however, the overall conclusion remains unaffected.

\noindent \textbf{Oversmoothing:}
In this part, we employ Dirichlet energy to assess the smoothness of the learned embeddings \cite{rusch2023survey}. The Dirichlet energy measures the mean distance between nodes and is defined as follows:
\vspace{-0.05in}
$$ DE(Z) = \sqrt{\frac{1}{|\mathcal{V}|}\sum_{i\in \mathcal{V}}\sum_{j\in \mathcal{N}_i}\|Z_i-Z_j\|^2}$$
Here, $\mathcal{N}_i$ denotes the neighbors of node $i$. Low Dirichlet energy means smooth embedding. We compare our model with two different baselines:
\begin{itemize}
    \item Static: We create a static graph by averaging the adjacency matrices of all snapshots and using the feature matrix of the last snapshot as its feature. Subsequently, we learn the embeddings using a vanilla multi-layer GCN.
    \item W/o loop: This model shares the structure of IDGNN but does not enforce the learned embeddings to be a fixed point.
\end{itemize}
We evaluate these methods on the Brain10 dataset, and the results are presented in Table \ref{oversmooth}. The "static" method exhibits oversmoothing as the layer count increases, and a similar issue is observed for the "W/o loop" method, despite stacking one layer for each snapshot. These findings strongly indicate that our model effectively addresses oversmoothing in dynamic graph learning by leveraging the implicit model structure.
\begin{table}[t]
    \centering
    \rowcolors{2}{white}{Gray} 
    \caption{Evaluating the smoothness of embeddings and the AUC on Brain10. }
    \begin{tabular}{ccc}
    \toprule
        & Dirichlet Energy $\uparrow$ & AUC $\uparrow$\\ \midrule
        %Static (2 layers) & 6.283 & 90.77 \\
        Static (3 layers) & 10.660 & 89.79 \\
        Static (4 layers) & 8.781 & 88.67 \\
        Static (5 layers) & 3.399 & 81.49 \\
        W/o looping & 8.957 & 85.61 \\
        IDGNN & \textbf{35.299} & \textbf{94.87} \\
        \bottomrule
    \end{tabular}
    \label{oversmooth}
\end{table}

\begin{table}[t]
    \centering
    
    \caption{Runtime and performance comparison between SGD and Bilevel Methods.}
    \begin{tabular}{ccccc}
  \toprule
        & \multicolumn{2}{c}{Runtime (s/win)} & \multicolumn{2}{c}{Performance} \\
        & SGD& Bilevel & SGD& Bilevel
        \\\midrule\rowcolor{Gray}
        Brain10 & 624& \textbf{0.390}&94.7& \textbf{94.9}\\
        PeMS04 &0.72 & \textbf{0.049}& \textbf{0.63} & \textbf{0.63}\\\rowcolor{Gray}
        PeMS08 &0.29 & \textbf{0.046}& 0.56 & \textbf{0.50} \\
        England-COVID &0.092 &\textbf{0.030} &\textbf{2.97} &3.05\\\bottomrule
     \end{tabular}
        \label{tab:week2}
\end{table}

\subsection{Efficiency}
We compare runtime and performance between SGD and bilevel optimization algorithms. To this end, we compare Brain10, England-COVID, PeMS04, and PeMS08. The results are summarized on the Table \ref{tab:week2}. The results are computed by averaging the runtime of a whole epoch with the number of dynamic graphs $N$. 

% \begin{wrapfigure}{r}{0.5\textwidth}
% \vspace{-0.3in}
%   \begin{center}
%   \scalebox{0.9}{
  % \begin{tabular}{c|cc}
  % \hline
  %        &  Fixed Point & Bilevel\\ \hline
  %       Brain10 & 624& \textbf{0.39}\\
  %       PeMS04 &0.72 & \textbf{0.049}\\
  %       PeMS08 &0.29 & \textbf{0.046}\\
  %       England-COVID &0.092 &\textbf{0.030}\\\hline
  %       Brain10 &\textbf{94.7}& 94.5\\
  %       PeMS04 & 0.628 & \textbf{0.58}\\
  %       PeMS08 &\textbf{0.501} & 0.56 \\
  %       England-COVID &\textbf{2.97} &3.05 \\\hline
  %    \end{tabular}
%}
%   \end{center}
% \end{wrapfigure}

% \begin{table}[]
%     \centering
%     \caption{Runtime and performance comparison between fixed-point (FP) and bilevel (Bi) Methods.}
%     \begin{tabular}{c|cccc}
%   \hline
%         &\multicolumn{2}{c}{Runtime (s/batch)}& \multicolumn{2}{c}{Performance}
%         \\
%          &FP &Bi &FP &Bi \\ \hline
%         Brain10 & 624& \textbf{0.39}&\textbf{94.7}& 94.5\\
%         PeMS04 &0.72 & \textbf{0.049}& 0.628 & \textbf{0.58}\\
%         PeMS08 &0.29 & \textbf{0.046}&\textbf{0.501} & 0.56 \\
%         England-COVID &0.092 &\textbf{0.030}&\textbf{2.97} &3.05 \\\hline
%      \end{tabular}
%     % 
%     \label{mandt}
% \end{table}

These methods have similar performance, but the runtime results show that the bilevel optimization algorithm is much faster than the SGD. Especially, in the Brain10 dataset, bilevel algorithm achieves \textit{1600}
times of speedup compared with SGD. Furthermore, we notice that the ratio of runtimes in PeMS04 and PeMS08 is $\frac{0.72}{0.29}=2.48$, and the squared ratio of their number of nodes is $(\frac{307}{170})^2=3.26$. This confirms our complexity result for SGD, which is quadratic regarding the number of nodes. On the other hand, the bilevel method exhibits only linear dependency. We also present the memory usage and runtimes of all methods on Reddit4 and DBLP5.
The memory efficiency of implicit models comes from the fact that implicit models can use few parameters and do not need to store the intermediate results. However, we need to store intermediate results and backpropagate for our bi-level method. Due to the simple RNN-free architecture of our method, our approach is competitive in runtime and memory. We provide a memory and runtime comparison on DBLP5 and Reddit4. The results are summarized in Table \ref{mandt2}.

\subsection{Complexity}
To contrast with the emprical runtime, here we present the theoretical time complexities for our approach and the baselines. For an arbitrary temporal graph, we denote the total number of snapshots as $T$, the total number of nodes as $n$, the number of edges of time $t$ as $E_t$, and $E_{agg}$ as the number of aggregated edges, which satisfies $E_{agg}\ll\sum_t E_t$ or $ E_{agg}\approx\sum_t E_t $. Some basic complexities: GRU, self-attention, and LSTM have complexity $O(T^2)$ if the input sequence has length $T$; GCN and SpectralGCN have complexity $O(nd^2+Ed)$; GAT has complexity $O(nd+Ed^2)$. The complexities of all models are summarized in Table \ref{tab:comp}. Based on the complexity results, IDGNN is faster than EvolveGCN-O, EvolveGCN-H, GCN-GRU, GCRN-M2, and DCRNN due to the absence of RNN in our model. 
\begin{table}[]
    \centering
    \begin{tabular}{c|c}
\hline
    EvolveGCN-O&	$O(Td^2+Tnd^2+\sum_t E_td) $\\
EvolveGCN-H	&$O(Td^2+Tnd^2+\sum_t E_td) $\\
GCN-GRU	&$O(Td^2+Tnd^2+\sum_t E_td) $\\
DySAT	&$O(Td^2+Tnd+\sum_t E_td^2) $\\
GCRN-M2&	$O(Td^2+Tnd^2+\sum_t E_td) $\\
DCRNN	&$O(Td^2+Tnd^2+\sum_t E_td) $\\
TGAT	&$O(Tnd+\sum_t E_td^2) $\\
TGN	&$O(E_{agg}d+Tnd^2+\sum_t E_td^2) $\\
GRU-GCN	&$O(E_{agg}d+Tnd^2+ TE_{agg}d^2) $\\
IDGNN	&$O(Tnd^2+\sum_t E_td) $ \\\hline
\end{tabular}
    \caption{Summary of complexities of all models}
    \label{tab:comp}
\end{table}

\begin{table}[b]
    \centering
    \caption{Evaluating different variants of our model. Presenting AUC results on Brain10 datasets }
    
    \begin{tabular}{cccccc}
    \toprule
        & Share both & IDGNN & Share $V$& Not share & W/o loop\\ \midrule
        AUC & 94.29 & 94.87 & 94.87 & 94.90 & 85.62\\\bottomrule
    \end{tabular}
    \label{Configu}
\end{table}

\subsection{Ablation Study}
In this section, we will delve into the various configurations of our model. Drawing from the properties mentioned earlier, our model can be represented by the formula (\ref{sys_matrix}). We have deliberately decided to assign different weights ($W$) to each timestamp while maintaining weight-tied for $V$. Alternatively, the model comprises $T$ layers of GCN and one linear layer. The linear layer serves to aggregate static information, while the GCNs handle dynamic information. To validate our architecture choice, we conducted a thorough comparison of our model against other configurations:
\begin{itemize}
    \item Share both: Both $W$ and $V$ are shared across layers.
    \item Share $V$: Only $V$ is shared across layers.
    \item Not share: $W$ and $V$ are not shared.
    \item W/o loop: as defined in Section \ref{longrange}.
\end{itemize}

The results are presented in Table \ref{Configu}. As observed, the share-both model exhibits the poorest performance. We believe this is due to the limited number of free variables available for learning, which makes the training process challenging. In our approach and the share-$V$ method, we achieve very similar results. Our model utilizes $T$ layers of GCN for dynamic information and one linear layer for static information, while the share-$V$ method employs one GCN and $T$ linear layers. The not-share method achieves the best result, although the improvement is negligible. However, it increases the parameter size, resulting in significant computational overhead. Hence, we opt for the current configuration, as it delivers good performance while minimizing parameter size, especially since the number of attributes may exceed the hidden dimension. We additionally evaluate an alternative baseline, denoted as "w/o loop," wherein we eliminate the looping structure from IDGNN. The obtained results reveal that this model exhibits the lowest performance, underscoring the efficacy of our proposed approach.

\section{Conclusions}
In this paper, we propose a novel implicit graph neural network for dynamic graphs. As far as we know, this is the first implicit model on dynamic graphs. We demonstrate that the implicit model we proposed has the well-posedness characteristic. We proposed a standard optimization algorithm using the Implicit Function Theorem. However, the optimization was too computationally expensive for our model. Hence, we proposed a novel bilevel optimization algorithm to train our proposed model. We conducted extensive experiments on 6 real-world datasets and one toy dataset.  The regression and classification tasks show that the proposed approach outperforms all the baselines in most settings. Finally, we also demonstrated that the proposed bilevel optimization algorithm obtains significant speedup over standard optimization while maintaining the same performance. A key limitation of our proposed approach is that it can only predict the consecutive snapshot. In the future, we plan on addressing this issue and also provide a diffusion model-based training algorithm.

%representation learning and a bilevel optimization algorithm for efficiently training this model. We also investigate oversmoothing issues in the deep dynamic graphs. The experiment shows that our model resists overshooting when deepening the network and outperforms the state-of-the-art baseline. 

\section{Acknowledgement}
We are grateful to the anonymous reviewers for their constructive comments and suggestions. This work was supported in part by the NSF Cybertraining 2320980, SCH 2306331, CAREER  1844403 and the CDC MInD Healthcare group under cooperative agreement U01-CK000594.

\clearpage
\bibliographystyle{ACM-Reference-Format}
\bibliography{ref}

\clearpage
\appendix

\appendix
\section{Experiment}

\subsection{Experiment Setup}
\label{appendix:setup}
We follow the procedure from \cite{gao2022equivalence} and utilize the provided code 
%\footnote{https://www.dropbox.com/s/gqtzpngj5vaah9s/ICML2022-Code-Submit.zip} 
as the code base to compare all the baselines and our method. Specifically, we first split the dataset into three portions with ratios 70\%-10\%-20\% for training, validation, and testing, respectively. Splitting is based on nodes for transductive tasks and time for inductive tasks. We then normalize node attributes and edge attributes with the 0-1 normalization method. We train on the training portion, find the best hyperparameters using the validation set, and report the performance on the test set. We also use ROCAUC score to evaluate classification tasks and mean average percentage error (MAPE) for regression tasks.

% \subsection{Experiment setup}
% We follow the procedure from \cite{gao2022equivalence} and utilize the provided code \footnote{https://www.dropbox.com/s/gqtzpngj5vaah9s/ICML2022-Code-Submit.zip} as the code base to compare all the baselines and our method. Specifically, we first split the dataset into three portions with ratios 70\%-10\%-20\% for training, validation, and testing, respectively. Splitting is based on nodes for transductive tasks and time for inductive tasks. We then normalize node attributes and edge attributes with the 0-1 normalization method. We train on the training portion, find the best hyperparameters using the validation set, and report the performance on the test set. We also use ROCAUC score to evaluate classification tasks and mean average percentage error (MAPE) for regression tasks.  

\subsection{Hyperparameter}
\label{appendix:hyper}
For detailed baselines' architecture, please refer to \cite{gao2022equivalence}. Notice that, for all the methods and all task, we fixed the embedding size as $16$, and we searched the learning rates from $0.1$, $0.01$, and $0.001$. For Brain10, we observed that our method converged slowly, then we let it run for $1000$ epochs. For DBLP5 and Reddit4, we let all the methods run $500$ epochs for $5$ times for each learning rate and report the performance on the test set where the performance of the validation set is the best. 
For regression datasets, we run 100 epochs for England-COVID and 10 epochs for PeMS04/PeMS08.
The hyperparameter our model $\eta_1\in\{0.5,0.7,0.9,1\}, \eta_2\in\{0.001,0.01,0.1\}$.

\section*{Proofs}
\begin{lemma}
\label{a1}
    If $\|.\|_{op}$ is the matrix operator norm on $\mathbb{R}^{n\times n}$ and $\lambda_{\text{PF}}(A)$ outputs the largest absolute eigenvalue of $A$, then, for any matrix $A\in \mathbb{R}^{n\times n}$,
    $$\lambda_{\text{PF}}(A)\le \|A\|_{op}$$
\end{lemma}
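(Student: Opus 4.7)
The plan is to prove this classical spectral bound by the standard eigenvalue-eigenvector argument. First I would let $\lambda$ be an eigenvalue of $A$ that attains the largest modulus, so that $\lambda_{\text{PF}}(A) = |\lambda|$ by definition. Let $v \ne 0$ be an associated eigenvector, and without loss of generality normalize it so that $\|v\| = 1$. Then $Av = \lambda v$ directly gives $\|Av\| = |\lambda|\cdot\|v\| = |\lambda|$.

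Next I would invoke the definition of the operator norm, $\|A\|_{op} = \sup_{\|x\|=1}\|Ax\|$, which yields $\|Av\| \le \|A\|_{op}\cdot\|v\| = \|A\|_{op}$. Chaining the two displays produces $|\lambda| \le \|A\|_{op}$, which is exactly the claim $\lambda_{\text{PF}}(A) \le \|A\|_{op}$.

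The one subtlety I anticipate is that even for a real matrix $A \in \mathbb{R}^{n\times n}$, the eigenvalue $\lambda$ and eigenvector $v$ may be complex, whereas the operator norm is typically defined as a supremum over real unit vectors. I would address this either by (i) noting that any real matrix admits a canonical extension to a linear operator on $\mathbb{C}^n$ whose operator norm (with respect to the Hermitian extension of $\|\cdot\|$) coincides with its real operator norm, or (ii) sidestepping complex arithmetic entirely by appealing to Gelfand's formula: since $\|A^k\|_{op} \le \|A\|_{op}^k$ for all $k \ge 1$ by submultiplicativity, one has $\lambda_{\text{PF}}(A) = \lim_{k\to\infty}\|A^k\|_{op}^{1/k} \le \|A\|_{op}$. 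The Gelfand-based argument is slightly longer but avoids any discussion of complexification, so for a clean exposition I would include the eigenvector proof as the primary argument and add a brief parenthetical remark that the eigenvector may be taken in $\mathbb{C}^n$ with the Hermitian extension of the norm.

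Overall this is a short proof with no real obstacle; the only care required is handling the real-versus-complex issue cleanly so that the inequality $\|Av\| \le \|A\|_{op}\|v\|$ is unambiguously justified for the (possibly complex) eigenpair realizing $\lambda_{\text{PF}}(A)$.
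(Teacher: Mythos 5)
Your proof is correct and follows essentially the same eigenvalue--eigenvector argument as the paper, which forms the matrix $X$ with every column equal to the eigenvector and applies submultiplicativity of the operator norm, whereas you apply the vector-norm definition of $\|A\|_{op}$ directly; the two are interchangeable. Your explicit handling of the real-versus-complex subtlety is a point of care that the paper's own proof silently glosses over (its $X$ is also complex when $\lambda$ is), so no gap on your end.
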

\begin{proof}
    Let $\lambda$ be the eigenvalue of $A$, and let $x\neq 0$ be the corresponding eigenvector. From $Ax=\lambda x$, we have 
    $$AX=\lambda X$$
    where each column in $X$ is $x$. Further, we have
    $$|\lambda|\|X\|_{op}=\|\lambda X\|_{op}=\|AX\|_{op}\le \|A\|_{op} \|X\|_{op}$$
    Since $\|X\|>0$, taking the maximal $\lambda$ gives the result.
\end{proof}

\begin{restatable}{lemma}{lemmaone}
\label{lemma_1} 
    The equilibrium equation $z=\sigma(Mz+b)$ has a unique fixed point solution if $\||M|\|_{op}< 1$, where $\|.\|_{op}$ is the operator norm, and $\sigma(\cdot)$ is an element-wise non-expansive function.
\end{restatable}
\begin{proof}
    Based on Lemma \ref{a1} and Theorem 4.1 in \cite{gu2020implicit}, this lemma is immediately followed.
\end{proof}
% \lemmaone*

\subsection*{Proof of Theorem 3.1}

\begin{proof}
By Lemma. \ref{lemma_1}, the well-posedness requirement for Formula. (\ref{eq1}) is $\|\mathcal{M}\|_{op}|\le 1$. Since Formula. (\ref{eq1}) and (\ref{sys}) are equivalent, the well-posedness requirement for Formula. (\ref{sys}) is also $\||\mathcal{M}|\|_{op}< 1$.

Let $M:=\begin{bmatrix} 0&M_1\\ \hat{M}&0 \end{bmatrix}$ where $\hat{M}:=\begin{bmatrix} M_2 &\hdots &\\
          0&\ddots&0\\
          0&\hdots&M_t
\end{bmatrix}$. Let $\Tilde{M}:=\begin{bmatrix} \hat{M}&0\\ 0&M_1 \end{bmatrix}$. Then 
\begin{align*}
    \||M|\|_{op}=\||\Tilde{M}|\begin{bmatrix} 0&I_m\\ I_n&0 \end{bmatrix}\|_{op}\le \||\Tilde{M}|\|_{op} \cdot\|\begin{bmatrix} 0&I_m\\ I_n&0 \end{bmatrix}\|_{op}\\
    =\||\Tilde{M}|\|_{op}=\max\{\||M_1|\|_{op},...,\||M_t|\|_{op}\}
\end{align*}

This means if all subsystems satisfy the largest eigenvalue constraint,  then the coupled equilibrium equation has a fixed point solution by Lemma \ref{lemma_1}.
\end{proof}
\begin{comment}
\section*{Proof of Lemma 2}
\begin{proof}
Recall that we have the following formula
\begin{align*}
    \label{eq_one}
    z_j = &\sigma(M_{j+T}\sigma(M_{j+T-1}\cdots\sigma(M_{j+1}z_i\\ &+\textbf{vec}(VX_{j+1}))\cdots +\textbf{vec}(VX_{j+T-1}))+\textbf{vec}(VX_{j+T}))
\end{align*}
$\Leftarrow$: Suppose formula (\ref{eq_one}) converges to ${z_1,...,z_T}$, and for any arbitrary $j$, formula (\ref{sys}) converges to $\hat{z}_j$, where $z_j\neq \hat{z}_j$. If we substitute $\hat{z}_j$ into formula (\ref{eq_one}), we obtain a sequence $\{\hat{z}_j, ..., \hat{z}_T, \hat{z}_1, ..., \hat{z}{j-1}\}$. This sequence also serves as a fixed-point of formula (\ref{eq_one}), given that $\hat{z}_j$ is the fixed-point of formula (\ref{sys}) and formula (\ref{sys}) is a composition of different levels of GCN. According to our assumption, formula (\ref{eq_one}) converges to a unique solution, implying that these two expressions share the same fixed point.  $\Rightarrow$ part follows a similar line of reasoning.
\end{proof}
\end{comment}

\subsection*{Proof of Theorem 3.2}
\thmtwo*
\begin{proof}
    Suppose $\{W^t\}$ satisfy $\|W^t\|_{op}\|A^t\|_{op}<1$ for all $t$, then the following equation has a unique fixed point.
    \begin{align*}
    \begin{bmatrix}
    z^1\\
    z^2\\
    z^3\\
    \vdots\\
    z^T
    \end{bmatrix}=\sigma \left(\begin{bmatrix}
    0&0&\cdots &0&M^1\\
    M^2&0&\cdots &0&0\\
    0&M^3&\cdots &0&0\\
    \vdots &\vdots&\ddots &\vdots&\vdots\\
    0&0&\cdots &M^T &0
    \end{bmatrix}\begin{bmatrix}
    z^1\\
    z^2\\
    z^3\\
    \vdots\\
    z^T
    \end{bmatrix} + \begin{bmatrix}
       \textbf{vec}( VX^1) \\ \textbf{vec}( VX^2)\\ \textbf{vec}( VX^3) \\ \vdots \\ \textbf{vec}( VX^T)
    \end{bmatrix} \right)
\end{align*}
and this condition implies $\|\mathcal{M}\|_{op}\le 1$. Based on Theorem 4.3, there exists a set of diagonal matrices $\{S^t\}$ such that $$\hat{W}^t=S^tW^t(S^t)^{-1}, \hat{V}=S^tV $$ Then the fixed-point of following equations
\begin{align*}
    \begin{bmatrix}
    \hat{z}^1\\
    \hat{z}^2\\
    \hat{z}^3\\
    \vdots\\
    \hat{z}^T
    \end{bmatrix}=\sigma \left(\begin{bmatrix}
    0&0&\cdots &0&\hat{M}^1\\
    \hat{M}^2&0&\cdots &0&0\\
    0&\hat{M}^3&\cdots &0&0\\
    \vdots &\vdots&\ddots &\vdots&\vdots\\
    0&0&\cdots &\hat{M}^T &0
    \end{bmatrix}\begin{bmatrix}
    \hat{z}^1\\
    \hat{z}^2\\
    \hat{z}^3\\
    \vdots\\
    \hat{z}^T
    \end{bmatrix} + \begin{bmatrix}
       \textbf{vec}( \hat{V}X^1) \\ \textbf{vec}( \hat{V}X^2)\\ \textbf{vec}( \hat{V}X^3) \\ \vdots \\ \textbf{vec}( \hat{V}X^T)
    \end{bmatrix} \right)
\end{align*}
satisfies the following relation
\begin{align*}
\begin{bmatrix}
    \hat{z}^1\\
    \vdots\\
    \hat{z}^T
    \end{bmatrix}=\begin{bmatrix}
    (S^1)^{-1}&\cdots &0\\
    \vdots &\ddots &\vdots\\
    0&\cdots &(S^T)^{-1} 
    \end{bmatrix}
    \begin{bmatrix}
    \hat{z}^1\\
    \vdots\\
    \hat{z}^T
    \end{bmatrix}
\end{align*}
\end{proof}

\subsection*{Proof of Remark 1}
\begin{proof}
    Given a set of dynamic graphs $\{\mathcal{G}_i\}_{i=1}^N$, we can construct a single dynamic graph by merging the snapshots that are from the same time stamp, then we obtain \begin{align}
        \hat{\mathcal{G}}=\{[G_i^1,...,G_N^1],...,[G_i^T,...,G_N^T]\}
    \end{align}
    Let $\hat{A}^t$ denote the adjacency matrix of $[G_i^t,...,G_N^t]$. By theorem 3.2, we need to ensure $\|W^t\|_{\infty}\|\hat{A}^t\|_{op}<1$. Since $\hat{A}^t$ contains $N$ disconnected graphs, $\|\hat{A}^t\|_{op}\le \max_{i}\|A^t_i\|_{op}$, which means $\|W^t\|_{\infty}\|A_i^t\|_{op}<1$ needed to be satisfied for all $i$. Since $t$ is arbitrary, the remark holds.
\end{proof}

\section{Embedding Visualization}
\label{appendix:emb}
In this section, we explore an interesting aspect of our method that can provide empirical insights into its ability to mitigate oversmoothing. We conduct experiments on a synthetic dataset that bears resemblance to toy datasets.

The dataset comprises 10 snapshots, with each snapshot representing a clique of 10 nodes. Each node is associated with 10 attributes. The nodes fall into two distinct classes, but we deliberately conceal the label information in the attributes of the first snapshot. Specifically, the first two dimensions of the attributes represent the one-hot encoding of the labels, while the remaining dimensions are set to zero. Additionally, we assign unique IDs to the nodes in sequential order. Nodes with IDs ranging from 0 to 4 belong to class 0, while those with IDs ranging from 5 to 9 belong to class 1. To assess the effectiveness of our method, we visually compare the embedding results with those obtained from TGCN.

Upon examining the visualizations, we observe that our model's embeddings exhibit gradual changes, whereas TGCN's embeddings remain consistent for nodes belonging to the same class. From a node-centric perspective, TGCN's embeddings seem reasonable. Nodes of the same class possess identical features and exhibit the same topological structure. Therefore, it is logical for them to share a common embedding. However, our embeddings tend to differentiate each individual node. We believe that this characteristic plays a role in mitigating the oversmoothing problem within our model.

Furthermore, we conducted an additional experiment to quantitatively evaluate our model’s ability to tackle over-smoothing. This experiment is conducted on the binary toy dataset: the toy data we constructed consists of a dynamic graph with a maximum of 64 snapshots (adapt to layers), with each snapshot being a clique of 10 nodes. Each node has 10 associated attributes. The task is binary classification where each node’s class information is hidden in its first time-stamp’s attributes. Attributes of other time stamps are randomly sampled from the normal distribution. All methods are trained with a maximum of 2000 epochs and a learning rate of 0.001. Finally, we evaluate their smoothness using the Mean Average Distance (MAD). Results are summarized in Table \ref{tab:smooth}.
\begin{table}[]
    \centering
    \begin{tabular}{c|c|c|c}
    \hline
        Layers&	GCN-GRU&	T-GCN	&IDGNN\\
        8	&0.6217	&0.9934	&1.1113\\
        16	&0.5204	&0.8719	&1.0982\\
        32	&0.0077	&0.7176	&1.0019\\\hline
    \end{tabular}
    \caption{Smoothness of embeddings. (The larger the better)}
    \label{tab:smooth}
\end{table}

\begin{figure}
     \centering
     \begin{subfigure}[b]{0.35\textwidth}
         \centering
         \includegraphics[width=\textwidth]{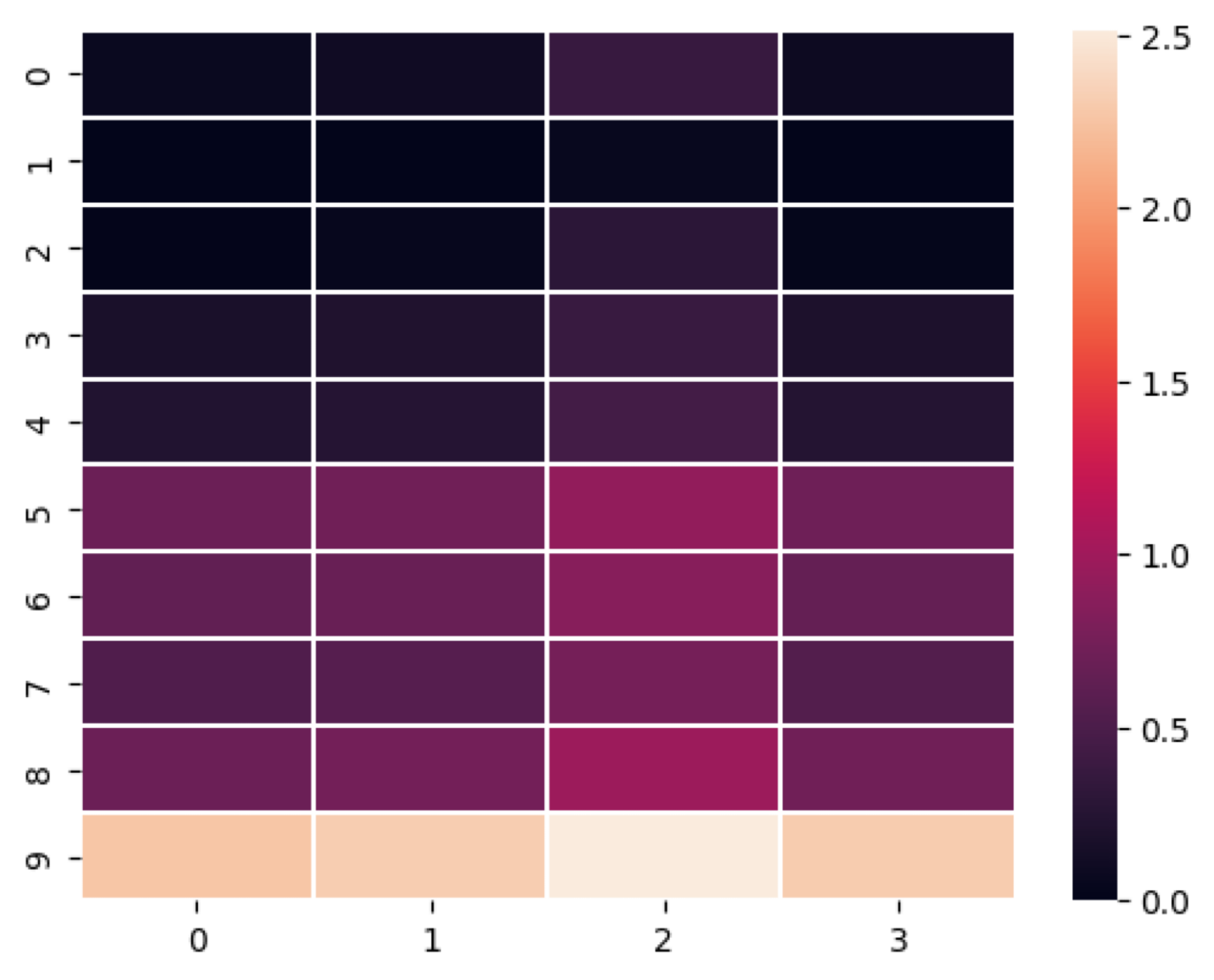}
         \caption{Our model}
         \label{fig1}
     \end{subfigure}
     \hfill
     \begin{subfigure}[b]{0.35\textwidth}
         \centering
         \includegraphics[width=\textwidth]{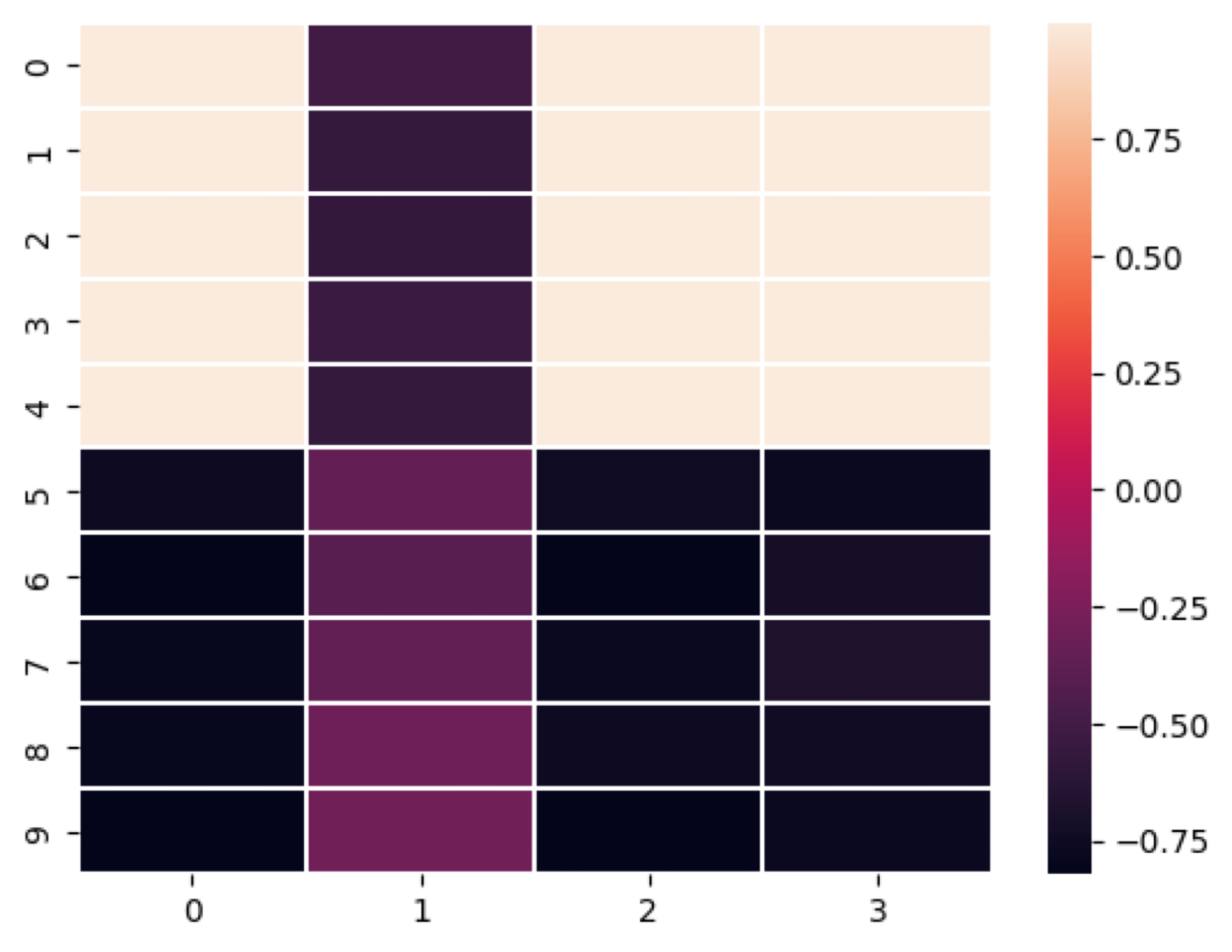}
         \caption{TGCN}
         \label{fig2}
     \end{subfigure}
        \caption{The embedding visualization of our method and TGCN}
        \label{fig:two graphs}
\end{figure}

% \section{Model Overview}
% \begin{figure*}
%     \centering
%     \includegraphics[width=\textwidth]{pics/}
%     \caption{Model overview}
% \end{figure*}
% Here, we draw a model diagram for our training algorithm. The blue line denotes the forward process, and the orange dashed line denotes the backward process.

\section{Hessian Vector Product}
To compute the product of Hassian and a vector: $Hv$, and $H=\frac{\partial^2 f}{\partial x^2}$. We compute the product by $Hv = \frac{\partial (\frac{\partial f}{\partial x })^T v }{\partial x}$. In this way, we are not explicitly computing the Hessian. 
\footnote{\url{https://jax.readthedocs.io/en/latest/notebooks/autodiff_cookbook.html}}

\section{Additional Result on Toy Data}
\begin{figure}
    \centering
    \includegraphics[width=.35\textwidth]{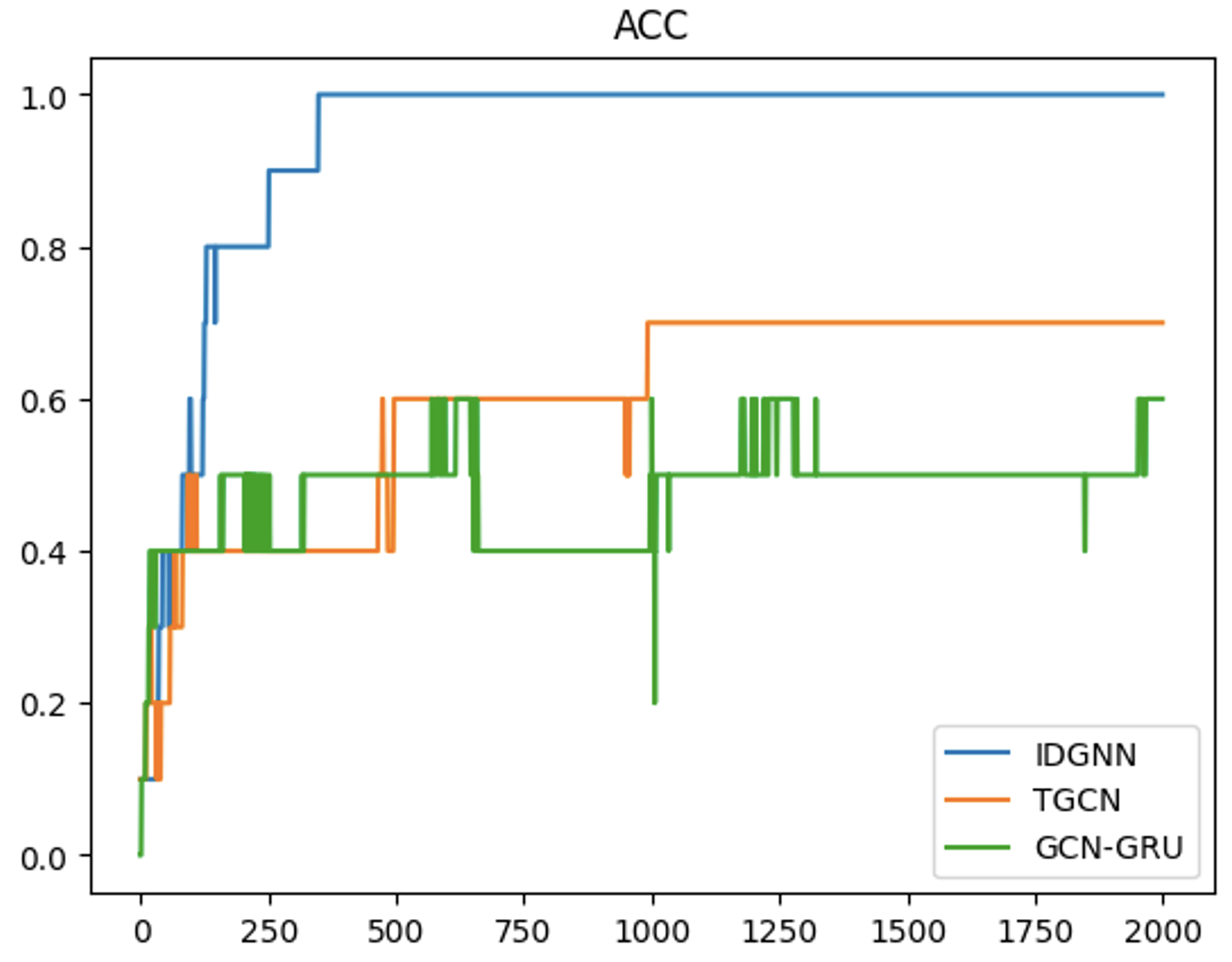}
    \caption{Additional result.}
\end{figure}
This synthetic experiment shifts label information from time stamp 1 to time stamp 5. This adjustment ensures uniform difficulty in utilizing label information across all models. Implementing this change postpones our method towards achieving 100\% accuracy by approximately 50 epochs. However, even with this modification, the baselines still struggle to fit the data.

% \section{Implementation}
% The implementation of our method is quite easy. We 

\end{document}